\documentclass{article}
\usepackage{authblk}
\usepackage[margin=1in]{geometry}
\pdfoutput=1

\usepackage{graphicx} 
\usepackage{subcaption}

\usepackage[numbers]{natbib}

\usepackage{algorithm}
\usepackage{algorithmic}

\usepackage[T1]{fontenc}    
\usepackage{url}            
\usepackage{booktabs}       
\usepackage{amsfonts}       
\usepackage{nicefrac}       
\usepackage{microtype}      
\usepackage{array}
\usepackage{pbox}

\usepackage{times}
\usepackage{mathtools}

\usepackage{color}
\usepackage{multirow}
\usepackage{amsthm}
\usepackage{mathrsfs}
\usepackage{graphicx}
\usepackage{caption}
\usepackage{xspace}
\usepackage{float}

\usepackage{wrapfig}


\DeclareMathOperator*{\invK}{\text{inv}\bK}
\DeclareMathOperator*{\ztest}{\bz_\text{test}}



\mathchardef\mhyphen="2D

\DeclareMathOperator*{\argmin}{arg\,min}




\newcommand{\vertiii}[1]{{\left\vert\kern-0.25ex\left\vert\kern-0.25ex\left\vert #1
    \right\vert\kern-0.25ex\right\vert\kern-0.25ex\right\vert}}









\def\bb{{\mathbf{b}}}

\def\bff{{\mathbf{f}}}

\def\bk{{\mathbf{k}}}

\def\bt{{\mathbf{t}}}

\def\bw{{\mathbf{w}}}
\def\bx{{\mathbf{x}}}
\def\by{{\mathbf{y}}}
\def\bz{{\mathbf{z}}}
\def\bA{{\mathbf{A}}}

\def\bH{{\mathbf{H}}}

\def\bK{{\mathbf{K}}}

\def\bT{{\mathbf{T}}}

\def\bX{{\mathbf{X}}}
\def\bY{{\mathbf{Y}}}

\def\0{{\mathbf{0}}}





\def\bbE{{\mathbb{E}}}

\def\bbR{{\mathbb{R}}}


\def\cF{\mathcal{F}}

\def\cH{\mathcal{H}}
\def\cI{\mathcal{I}}

\def\cL{\mathcal{L}}

\def\cX{\mathcal{X}}


\def\sfA{\mathsf{A}}
\def\sfB{\mathsf{B}}

\def\sfL{\mathsf{L}}

\def\sfS{\mathsf{S}}


\def\frp{{\mathfrak{p}}}


\newtheorem*{rep@theorem}{\rep@title}
\newcommand{\newreptheorem}[2]{%
\newenvironment{rep#1}[1]{%
 \def\rep@title{#2 \ref{##1}}%
 \begin{rep@theorem}}%
 {\end{rep@theorem}}}
\newreptheorem{lemma}{Lemma'}
\newreptheorem{definition}{Definition'}
\newreptheorem{proposition}{Proposition'}
\newreptheorem{theorem}{Theorem'}

\newtheorem{theorem}{Theorem}
\newtheorem{definition}[theorem]{Definition}
\newtheorem{lemma}[theorem]{Lemma}
\newtheorem{proposition}[theorem]{Proposition}
\renewcommand{\text}[1]{{\textnormal{#1}}}





%

\author[1]{Rajiv Khanna}
\author[2]{Been Kim}
\author[3]{Joydeep Ghosh}
\author[4]{Oluwasanmi Koyejo}
\affil[1]{Department of Statistics \authorcr University of California at Berkeley \authorcr \texttt{rajivak@berkeley.edu}}
\affil[2]{Google Brain\authorcr \texttt{beenkim.mit@gmail.com}}
\affil[3]{Department of Electrical and Computer Engineering \authorcr University of Texas at Austin \authorcr \texttt{jghosh@utexas.edu}}
\affil[4]{Department of Computer Science \authorcr University of Illinois at Urbana-Champaign \authorcr \texttt{sanmi@illinois.edu}}

\begin{document}

\title{Interpreting Black Box Predictions using Fisher Kernels}
\maketitle

\newcommand{\fix}{\marginpar{FIX}}
\newcommand{\new}{\marginpar{NEW}}
\begin{abstract}
Research in both machine learning and psychology suggests that salient examples can help humans to interpret learning models. To this end, we take a novel look at black box interpretation of test predictions in terms of training examples. Our goal is to ask ``which training examples are most responsible for a given set of predictions''? To answer this question, we make use of Fisher kernels as the defining feature embedding of each data point, combined with Sequential Bayesian Quadrature (SBQ) for efficient selection of examples. In contrast to prior work, our method is able to seamlessly handle any sized subset of test predictions in a principled way. We theoretically analyze our approach, providing novel convergence bounds for SBQ over discrete candidate atoms. Our approach recovers the application of influence functions for interpretability as a special case yielding novel insights from this connection. We also present applications of the proposed approach to three use cases: cleaning training data, fixing mislabeled examples and data summarization.
\end{abstract}

\section{Introduction}\label{sec:introduction}


It has long been established that using examples to enable interpretability is one of the most effective approaches for human learning and understanding~\cite{newell1972human, cohen1996metarecognition, kim2014bayesian}. The ability to interpret using examples from the data can lead to more informed decision based systems and a better understanding of the inner workings of the model~\cite{KohL17,Kim2016}. In this work, we are interested in finding data points or prototypes that are ``most responsible'' for the underlying model making specific predictions of interest. To this end, we develop a novel method that is model agnostic and only requires an access to the function and gradient oracles. 

In a more formal sense, we aim to approximate the empiricial test data distribution using samples from the training data. Our approach is to first embed all the points in the space induced by the Fisher kernels~\cite{Jaakkola1999}. This provides a principled way to quantify closeness of two points with respect to the similarity induced by the trained model.  If two points in this space are close, then intuitively the model treats them similarly. We formally show that influence function based approach to interpretability~\citep{KohL17} is essentially doing the same thing.

Thus, our goal is to find a subset of the training data such that, when also embedded in a model-induced space, is \emph{close} to the test set in the distribution sense. We build this subset from the training data sequentially using a greedy method called Sequential Bayesian Quadrature (SBQ)~\cite{Ohagan1991}. SBQ is an importance-sampling based algorithm to estimate the expected value of a function under a distribution using discrete sample points drawn from it. To the best of our knowledge SBQ has not been used in conjunction with Fisher kernels for interpretability. Moreover, we leverage recent research in discrete optimization to provide novel convergence rates for the algorithm over discrete atomic sets. Our analysis also yields novel and more scalable algorithm variants of SBQ with corresponding constant factor guarantees. 


Our key contributions are as follows:
\begin{itemize}\setlength\itemsep{-0.5em}
\item We propose a novel method to select salient training data points that explain test set predictions for black box models.
\item To solve the  resulting combinatorial problem, we develop new faster convergence guarantees for greedy Sequential Bayesian Quadrature on discrete candidate sets. One novel insight that results is the applicability of more scalable algorithm variants for SBQ with provable bounds. These theoretical insights may be of independent interest.
\item We recover the influence function based approach of~\citet{KohL17} as a special case. This connection again yields several novel insights about using influence functions for model interpretation and training side adversarial attacks. Most importantly, we establish the importance of the Fisher space for robust learning that can hopefully lead to promising future research directions.
\item To highlight the practical impact of the our interpretability framework, we present its application to three different real world use-cases. 
\end{itemize}

\textbf{Related work}: There has been a lot of interest lately in model interpretation in various ways and their corresponding applications. Thus, we focus our related work on the subset of most closely related research. Our approach has a similar motivation as ~\citet{KohL17}, who proposed the use of influence functions for finding the most \emph{influential} training data point for a test data point prediction. The intuition revolves around infinitesimally perturbing the training data point and evaluating the corresponding impact on the test point. The method is only designed for single data points -- thus their extension to selecting multiple data points required an unmotivated heuristic approach. A complementary line of research revolves around feature based interpretation of models. Instead of focusing on choosing representative data points, the goal is to reveal which features are important for the prediction~\citet{Ribeiro2016}. Recently,~\citet{Kim2016} also made use of the unweighted MMD function to propose selection of prototypes and criticisms. While their approach can be used for exploratory analysis of the data, it has not been extended for explaining a model. Their focus, moreover, is on the use of criticisms in addition to examples as a vital component of exploring datasets. 

Fisher kernels were proposed to exploit the implicit embedding of a structured object in a generative model for discriminative purposes~\citep{Jaakkola1999}, and have since been applied successfully in a variety of applications~\citep{Perronnin2010}. The goal is to design a kernel for generative models of structured objects that captures the ``similarity" for the said objects in the corresponding embedding space. The kernel itself can then be used out of the box in discriminative models such as Support vector machines. 
\section{Background}
In this section, we provide an overview of the technical background required for our setup. We begin by fixing some notation. We represent sets using sans script fonts \textit{e.g.} $\sfA, \sfB$. Vectors are represented using lower case bold letters \textit{e.g.} $\bx,\by$, and matrices are represented using upper case bold letters \textit{e.g.} $\bX,\bY$.  Non-bold face letters are used for scalars \textit{e.g.} $j,M,r$ and function names \textit{e.g.} $f(\cdot)$. 

\subsection{Fisher Kernels}
\label{sec:fisher}

The notion of similarity that Fisher kernels employ is that 
if two objects are \emph{structurally} similar, then slight perturbations in the neighborhood of the fitted parameters $\hat{\theta} := \arg\max \log p(\bX | \theta) $,  would impact the fit of the two objects similarly. In other words, the feature embedding $\bff_i := \frac{\partial \log p(\bX_i | \theta)}{\partial \theta} |_{\theta = \hat{\theta}}$, for an object $\bX_i \rightarrow \bff_i$ can be interpreted as a \emph{feature mapping} which can then be used to define a similarity kernel by a weighted dot product:
\[ \kappa(\bX_i, \bX_j) := \bff_i^\top \cI^{-1} \bff_j,
\]

where the matrix $\cI:= \bbE_{p(\bX)}[ \frac{\partial  \log p(\bX | \theta)}{\partial \theta}^\top\frac{\partial \log p(\bX | \theta)}{\partial \theta} ]$ is the Fisher information matrix. The information matrix serves to re-scale the dot product, and  is often taken as identity as it loses significance in limit~\citep{Jaakkola1999}. The corresponding kernel is then called the \emph{practical} Fisher kernel and is often used in practice. We note, however, that dropping $\cI$ had significant impact on performance in our method, so we employ the full kernel. However, the practical Fisher Kernel is important to mention here. As we show in Section~\ref{sec:influence}, using the practical Fisher Kernel recovers the influence function based approach to interpretability~\citep{KohL17} as a special case.  Another interpretation of the Fisher kernel is that it defines the inner product of the directions of gradient ascent over the Riemannian manifold that the generative model lies in~\citep{ShaweTaylor2004}.

While appropriate feature mapping is crucial for predictive tasks, we observe that it is also is vital for interpretability.
Fisher kernels are ideal for our task because they seamlessly extract model-induced data similarity from trained model that we wish to interpret. 
To further motivate that such a task can not be trivially performed by a something like a parameter sweep over RBF kernels i.e. without supervision, we perform a simple toy experiment illustrated in Figure~\ref{fig:motivate_fisher}.

\begin{figure*}
\centering
\includegraphics[scale=0.5]{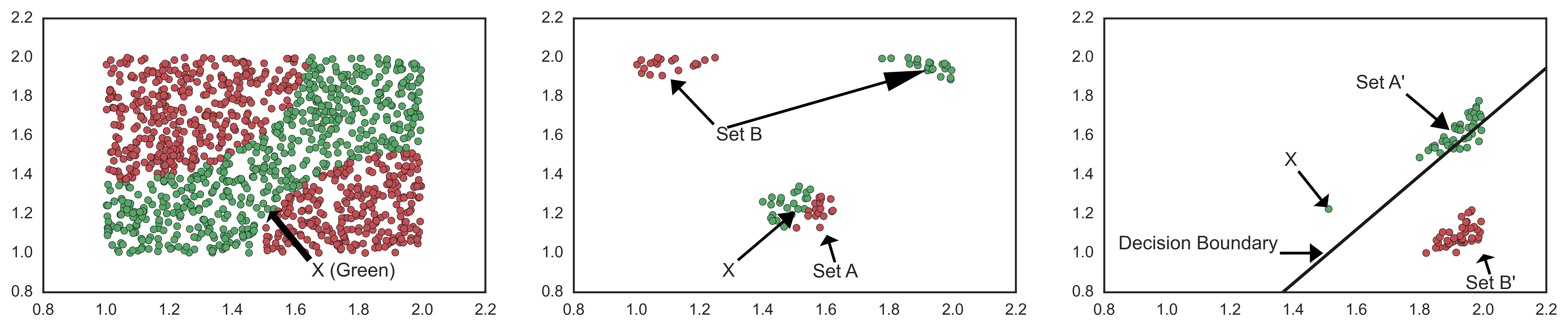}
\caption{A toy experiment to illustrate the usefulness of Fisher space mapping. [Left] 1200 samples on U[1,2] $\times$ U[1,2] with two labels - Green and Red as illustrated. A specific green point X is selected for further experiment. [Mid] Closest 40 (Set A) and farthest 40 points (Set B) in terms of RBF kernel similarity. A distance based kernel such as RBF would yield these points as most and least similar to X respectively. [Right] Closest 40 (Set A') and farthest 40 (Set B') to X in terms of the Fisher kernel similarity computed from a fitted logistic regression model. The decision boundary for the logistic regression is also presented. It predicts everything below it as red, and everything above it as green. The Fisher ``closeness'' here takes into account the label of the points as well as the log-likelihood gradient on the contour of the loss function and its direction for each point. Note that for points exactly on the boundary, their gradient and Fisher similarity with all other points will be $0$.}
\label{fig:motivate_fisher}
\end{figure*}

\subsection{Bayesian Quadrature}

Bayesian quadrature~\citep{Ohagan1991} is a method used to approximate the expectation of a function by a weighted sum of a few evaluations of the said function. Say a function $f:\cX\rightarrow \bbR$ is defined on a measurable space $\cX \subset \bbR^d$. Consider the integral:
\begin{equation}
 \bbE[f(\bx)] = \int_\cX f(\bx)p(\bx)d\bx \approx \sum_{i=1}^n w_i f(\bx_i),
 \label{eq:approxexpectation}
\end{equation}

where $w_i$ are the weights associated with function evaluations at $\bx_i$. Using $w_i = \nicefrac{1}{n}$ and randomly sampling $\bx_i$ recovers the standard Monte Carlo integration. Other methods include kernel herding~\citep{Chen2010SuperSamples} and quasi-Monte carlo~\citep{Dick2010book}, both of which use $w_i=\nicefrac{1}{n}$ but use specific schemes to draw $\bx_i$. Bayesian quadrature allows one to consider a non-uniform $w_i$ given a functional prior for $f(\cdot)$.  The samples $\bx_i$ can then be chosen as the ones that minimize the posterior variance~\citep{Huszar2012OptimallyWeightedHI} as we shall see in the sequel. The corresponding weights can be calculated directly from the posterior mean. We impose a Gaussian Process prior on the function as $f\sim \text{GP}(0,k)$ with a kernel function $k(\cdot,\cdot)$. The algorithm SBQ proceeds as follows. Say we have already chosen $n$ points: $\bx_i, i \in [n]$. The posterior of $f$ given the evaluations $f(\bx_i)$ has the mean function: 

\[\hat{f}(\bx) = \bk^\top \bK^{-1} \bff,\]

where $\bff$ is the vector of function evaluations $f(\bx_i)$, $\bk$ is the vector of kernel evaluations $k(\bx, \bx_i)$, and $\bK$ is the kernel matrix with $\bK_{ij} := k(\bx_i,\bx_j)$. 

We now focus on sampling the points $\bx_i$. The quadrature estimate provides not only the mean, but the full distribution as its posterior. The posterior variance can be written as:

\[\text{cov}(\bx,\by) = k(\bx,\by) -  k(\bx, \bX) \bK^{-1} k(\bX, \by), \]

where $\bX$ is the matrix formed by stacking $\bx_i$, and the kernel function notation is overloaded so that $k(\bX, \by)$ represents the column vector obtained by stacking $k(\bx_i, \by)$. The posterior over the function $f$ also yields a posterior over the expectation over $f$ defined in~\eqref{eq:approxexpectation}. For convenience, define the set $\sfS_j:=\{\bx_1, \bx_2,\ldots, \bx_j\}$. Say $Z(\sfS_j):= \sum_j w_j f(\bx_j)$. Then, it is straightforward to see $\bbE[Z(\sfS_n)] = \bz^\top \bK^{-1} \bff $, where $\bz_i  := \int k(\bx, \bx_i) p(\bx) d\bx$. Note that the weights in~\eqref{eq:approxexpectation} can be written as $w_i = \sum_j \bz_j [\bK^{-1}]_{ij} $. 

We can write the variance of $Z(\sfS_n)$ as:

\begin{equation}
\label{eq:sbq_costfunction}
\text{var}(Z(\sfS_n))  = \iint  k(\bx,\by) p(\bx)p(\by) d\bx d\by - \bz^\top \bK^{-1} \bz. 
\end{equation}

The algorithm Sequential Bayesian Quadrature (SBQ) samples for the points $\bx_i$ in a greedy fashion with the goal of minimizing the posterior variance of the computed approximate integral:
\[ \bx_{n+1} \leftarrow \argmin_{\bx \in \cX}\text{var}(Z(\sfS_{n} \cup \{\bx\})). \]


\section{Prototype Selection using Fisher Kernels}
In this section, we present our method to select sample representatives using Fisher kernels. For a loss function $\ell(\theta, \bx)$, where $\theta$ are the parameters of the model and $\bx$ is the data, to train a parametric model one would minimize the expected loss:
\begin{equation}
\label{eq:expectedloss}
\min \bbE_{p(\bx)} \ell(\theta, \bx),
\end{equation}

where $p(\bx)$ is the data distribution. Since we usually do not have access to the true data distribution, $p(\bx)$ is typically the empirical data distribution $p(\bx)= \frac{1}{n}\delta(\bx)$, where $\delta(\cdot)$ is $1$ if $\bx$ exists in the dataset, and $0$ otherwise, and $n$ is the size of the dataset. Our goal in this work is to  approximate the integral~\eqref{eq:expectedloss} over the test or validation set (which specifies the distribution $p$ for us) using a weighted sum of a \emph{few} points from the \emph{training} dataset~\eqref{eq:approxexpectation}. Note that while the training samples in general have measure 0 in the test or the validation set distribution in the euclidean space, the smoothening GP prior over the embedding space still allows for samples to be generated from the former to approximate the latter. 

For the kernel function in the GP prior in Bayesian Quadrature, we use the Fisher kernel of the trained parametric model. SBQ selection strategy inherently establishes a trade off between selecting data points that are representative of the parametric fit and diversity of the selected points. To see this, consider the SBQ cost function~\eqref{eq:sbq_costfunction}. At every new selection $\bx_{j+1}$, one one hand, the cost function rewards the selection of data points which are clustered closer together in the feature mapping space to increase the value of $\bz$ which in turn decreases variance. However, on the other hand, selecting points close to each other decreases the eigenvalues of $\bK^{-1}$ thereby increasing variance~\citep{Huszar2012OptimallyWeightedHI}. Thus, the SBQ seeks a tradeoff between these terms.

\subsection{An Efficient Greedy Algorithm}
In this section, we provide a practical greedy algorithm to select representative prototypes using SBQ to optimize~\eqref{eq:sbq_costfunction}. Note that the first term is constant w.r.t to $\sfS_n$. Moreover, $p(\bx) = \frac{1}{n}\delta(\bx)$. Thus, we can re-write $\bz_i = \frac{1}{n}\sum_{j=1}^n k(\bx_i, \bx_j)$ for each $i$ in training and each $j$ in the test set. This can be pre-computed by a row or column sum over the kernel of the entire dataset in $O(nt)$ time and stored as vector of size $t$ to speed up later computation, where $t$ is the size of the training set and $n$ is the size of the test set. Our greedy cost function at step $j+1$ is thus:
\begin{equation}
\label{eq:setcostfunction}
i^\star_{j+1} \leftarrow \arg\max_{\substack{i \in [m]\backslash\sfS_j \\ \sfS = \sfS_j \cup i  }} \bz_{\sfS}^\top [\bK_{\sfS\sfS}^{-1}] \bz_{\sfS}.
\end{equation}
The solution set is then updated as $\sfS_{j+1} = \sfS_j \cup \{i^\star_{j+1}\}$. The optimization~\eqref{eq:setcostfunction} requires an inverse of the kernel matrix of already selected data points which can be computationally expensive. However, we can use the following result from linear algebra about block matrix inverses to speed up operations. 

\begin{proposition}
\label{prop:matrixinverse} For an invertible matrix $\bA$, a column vector $\bb$, and a scalar $c$, let $d = c - \bb^\top \bA^{-1} \bb$, then
\[
\begin{bmatrix} \bA & \bb \\ 
\bb^\top &  c 
\end{bmatrix}^{-1} =   
\frac{1}{d}\begin{bmatrix}
d\bA^{-1} + \bA^{-1} \bb\bb^\top \bA^{-1}  & \bA^{-1}\bb \\ \bb^\top\bA^{-1}  & 1 
\end{bmatrix} \]
\end{proposition}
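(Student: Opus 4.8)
The statement is the standard Schur-complement formula for the inverse of a bordered (arrowhead) matrix, so I would prove it constructively rather than pulling the answer out of thin air. The plan is to exhibit a block $LDU$ factorization of the bordered matrix and then invert the three individually trivial factors. Concretely, with $d = c - \bb^\top \bA^{-1}\bb$ I would write
\[
\begin{bmatrix} \bA & \bb \\ \bb^\top & c \end{bmatrix}
= \begin{bmatrix} \bI & \0 \\ \bb^\top \bA^{-1} & 1 \end{bmatrix}
\begin{bmatrix} \bA & \0 \\ \0 & d \end{bmatrix}
\begin{bmatrix} \bI & \bA^{-1}\bb \\ \0 & 1 \end{bmatrix},
\]
which is well defined precisely because $\bA$ is assumed invertible, so that $\bA^{-1}\bb$ makes sense.

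First I would confirm that this factorization actually reproduces the original matrix: multiplying the two rightmost factors restores the top border via $\bb = \bA(\bA^{-1}\bb)$, and the bottom-right entry comes out as $\bb^\top\bA^{-1}\bb + d$, which collapses to $c$ exactly by the definition of $d$. That single line is where the Schur complement earns its name and is the only piece of algebra worth checking. Given the factorization, I invert by reversing the order of the factors and inverting each: the two unit-triangular factors invert simply by negating their single off-diagonal block, and the middle factor inverts entrywise to $\mathrm{diag}(\bA^{-1},\tfrac1d)$. Forming $\mathrm{(upper)}^{-1}\,\mathrm{(diag)}^{-1}\,\mathrm{(lower)}^{-1}$ then delivers the top-left block $\bA^{-1} + \tfrac1d \bA^{-1}\bb\bb^\top\bA^{-1}$, the scalar $\tfrac1d$ in the corner, and the off-diagonal blocks, which after factoring out $\tfrac1d$ match the stated matrix. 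As a sanity check I would also note that invertibility of the full matrix is equivalent to $d\neq 0$, since its determinant factors as $\det(\bA)\,d$.

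Honestly there is no serious obstacle here: the claim is elementary, and a reader could equally well verify it by forming the product of the bordered matrix with the proposed inverse and checking the four blocks against $\bI$. The only point demanding genuine care is sign bookkeeping on the off-diagonal blocks. The factorization route produces off-diagonal entries $-\tfrac1d\bA^{-1}\bb$ and its transpose $-\tfrac1d\bb^\top\bA^{-1}$, and the cancellation that forces the top-left product block to equal $\bI$ — namely the cross term $\bb\cdot(-\tfrac1d\bb^\top\bA^{-1})$ annihilating the contribution of $\tfrac1d\bA^{-1}\bb\bb^\top\bA^{-1}$ — only closes with that sign. I would therefore double-check the off-diagonal signs printed in the statement against this derivation before relying on the formula in the rank-one update used for~\eqref{eq:setcostfunction}.
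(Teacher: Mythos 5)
Your LDU factorization argument is correct, and it actually supplies something the paper does not: the paper states Proposition~\ref{prop:matrixinverse} with no proof at all, citing it as a standard linear-algebra fact, so there is no in-paper derivation to compare against. Your factorization into unit-triangular and block-diagonal factors, the verification that the bottom-right entry reassembles to $c$ via the definition of $d$, and the inversion of the three factors in reverse order are all sound, as is your remark that invertibility of the bordered matrix is equivalent to $d \neq 0$ because its determinant is $\det(\bA)\,d$.

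More importantly, the suspicion you raise at the end is justified: the formula as printed in the statement is wrong. Your derivation yields
\[
\begin{bmatrix} \bA & \bb \\ \bb^\top & c \end{bmatrix}^{-1}
= \frac{1}{d}\begin{bmatrix}
d\bA^{-1} + \bA^{-1}\bb\bb^\top\bA^{-1} & -\bA^{-1}\bb \\
-\bb^\top\bA^{-1} & 1
\end{bmatrix},
\]
with minus signs on the off-diagonal blocks, and those signs are not optional. With the positive off-diagonal signs as printed in the proposition, the top-left block of the product of the bordered matrix with the claimed inverse is
\[
\bA\left(\bA^{-1} + \tfrac{1}{d}\bA^{-1}\bb\bb^\top\bA^{-1}\right) + \bb\left(\tfrac{1}{d}\bb^\top\bA^{-1}\right)
= \bI + \tfrac{2}{d}\,\bb\bb^\top\bA^{-1},
\]
which equals $\bI$ only if $\bb = \0$: the cross term must cancel the rank-one correction rather than double it, and that forces the negative sign. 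This is not merely cosmetic for the paper, since Algorithm~\ref{algo:greedy} invokes Proposition~\ref{prop:matrixinverse} to maintain $\invK$ incrementally; an implementation transcribing the printed formula verbatim would maintain a wrong inverse, whereas the version you derived is the one that must be used.
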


Proposition~\ref{prop:matrixinverse} allows us to build the inverse of the kernel $\bK$ in~\eqref{eq:setcostfunction} greedily. The full algorithm is presented in Algorithm~\ref{algo:greedy}. 

\begin{algorithm}[h]
\caption{Greedy Prototype Selection}
\label{algo:greedy}
\centering
\begin{algorithmic}[1]
\STATE  \textbf{INPUT:} Data $\{\bx_i\}$, kernel function $k(\cdot,\cdot)$, number of selections to make $k$
\STATE //Pre-compute $\bz_i$
\STATE $\bz_i = \frac{1}{n} \sum_j  k(\bx_i, \bx_j) \forall i \in $ training and $j \in$ test
\STATE // Build solution set $\sfS$ greedily. Maintain current inverse(K) at each iteration as $\invK$
\STATE $\sfS = \emptyset$, $\invK = []$
\FOR{$i=1\ldots k$}
\STATE $j^\star = -1$, $\text{MAX} = - \infty$
\FOR{$j \in [t]\backslash \sfS$}
\STATE $\bt = \bz[\sfS \cup j]$
\STATE  $\bb = k(\bX_{\sfS}, \bx_{j^\star})$, $c = k(\bx_{j^\star}, \bx_{j^\star})$, $\bA^{-1} = \invK$  Get $\bT$ as the updated inverse using  Prop.~\ref{prop:matrixinverse}
\STATE If $(\bt^\top\bT\bt > \text{MAX})$, $j^\star = j$, $\text{MAX} = \bt^\top\bT\bt$
\ENDFOR
\STATE Write $\bb = k(\bX_{\sfS}, \bx_{j^\star})$, $c = k(\bx_{j^\star}, \bx_{j^\star})$, $\bA^{-1} = \invK$
\STATE Update: $\invK$ using Prop.~\ref{prop:matrixinverse}, $\sfS = \sfS\cup j^\star$
\ENDFOR
\STATE return $\sfS$
\end{algorithmic}
\end{algorithm}

Algorithm~\ref{algo:greedy} obviates the need for taking explicit inverses and only requires an oracle access to the kernel function. The algorithm itself is inherently embarrassingly parallelizable over multiple cores. We study guarantees for the algorithm in Section~\ref{sec:weaksubmod} which also motivates its more scalable variants.

\section{Analysis}
\label{sec:weaksubmod}
The greedy algorithm described in Algorithm~\ref{algo:greedy} while being simple also has interesting optimization guarantees that make it attractive to use in practice. In this section, we provide convergence guarantees for the cost function~\eqref{eq:sbq_costfunction} as $n$ increases. Typically for functions like these in the general case, the candidate set of atoms used to build the approximation is uncountably infinite - any possible sample from the underlying density is a candidate. As such, the convergence results are based on using Frank-Wolfe analysis on the marginal polytope~\cite{Bach2012Herding}.  However, for us the underlying set of candidate atoms are discrete points, which are at worst countably infinite. As such, for this special case, it is worth analyzing if we can provide better rates than the general available guarantees. It turns out that this is indeed possible. We are able to leverage recent research in discrete optimization to indeed provide a linear convergence rate for the forward greedy algorithm.  
 
Recall our set optimization function (from~\eqref{eq:setcostfunction}) is:

\begin{equation}
\label{eq:setoptimizationproblem}
 g(\sfS) := \max_{\substack {\sfS \subset [n] \\ | \sfS| \leq r}} \bz_\sfS [\bK_{\sfS\sfS}^{-1}] \bz_\sfS,
\end{equation}

where $n$ is the set of candidate training data points. We write $\mu_p :=\iint  k(\bx,\by) p(\bx)p(\by) d\bx d\by $. For the RKHS induced by the kernel $\cH$, we can equivalently re-write the cost function as~\citep{Huszar2012OptimallyWeightedHI,Bach2012Herding}: 

\begin{equation}
\label{eq:leastsquareoptimizationproblem}
\min_{\substack {\sfS \subset [n] \\ | \sfS| \leq r}} v(\sfS):=  \mu_p - \sum_{i \in \sfS} w_i \bz_i 
\end{equation}

For a matrix $\bA$, the smallest (largest) $k$-sparse eigenvalues is min (max) of $\frac{\bx^\top \bA \bx}{\bx^\top \bx}$ under the constraints $\| \bx\|_0 \leq k$, and $\bx \neq 0$. Note that we can write $v(\emptyset)= \mu_p  $. We present our convergence guarantee next. 

\begin{theorem}
\label{thm:convergence}
Say $\cH$ is finite dimensional and has bounded norm i.e. $\forall \nu \in \cH$, $\| \nu\|_\cH  < \infty$. Let $m$ be the smallest $2r$ sparse eigenvalue and $M$ be the largest $r+1$-sparse eigenvalues of the kernel matrix  $\bK$ of the training set. If $\sfS_G$ of size $k$ is the set returned by Algorithm~\ref{algo:greedy} and $\sfS^\star$ of size $r$ is the optimal solution of~\eqref{eq:leastsquareoptimizationproblem}, then if $k \geq \frac{M}{m}r \log \frac{1}{\epsilon}$, $v(\sfS_G)  - v(\sfS^\star) \leq \epsilon (v(\emptyset)  - v(\sfS^\star))$.
\end{theorem}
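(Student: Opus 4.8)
The plan is to recognize the objective as a cardinality-constrained concave quadratic maximization and to run the standard forward-greedy (orthogonal matching pursuit) analysis on it. First I would rewrite the gain in variational form: since the optimal quadrature weights on a support $\sfS$ are $\vw_\sfS = \bK_{\sfS\sfS}^{-1}\bz_\sfS$, the quantity $\bz_\sfS^\top \bK_{\sfS\sfS}^{-1}\bz_\sfS$ equals $\max_{\text{supp}(\vw)\subseteq\sfS} f(\vw)$ for the concave quadratic $f(\vw) := 2\bz^\top\vw - \vw^\top\bK\vw$. Writing $g(\sfS) := \max_{\text{supp}(\vw)\subseteq\sfS} f(\vw)$, we then have $v(\sfS) = \mu_p - g(\sfS)$ with $g(\emptyset)=0$, so $v(\emptyset)-v(\sfS^\star) = g(\sfS^\star)$ and $v(\sfS_G)-v(\sfS^\star) = g(\sfS^\star)-g(\sfS_G)$. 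Thus the target inequality is equivalent to the geometric-decrease bound $g(\sfS^\star)-g(\sfS_G)\le\epsilon\, g(\sfS^\star)$. It is also immediate that the greedy step in Algorithm~\ref{algo:greedy}, which maximizes $\bz_\sfS^\top\bK_{\sfS\sfS}^{-1}\bz_\sfS$ over the added index, is exactly the forward-greedy step maximizing the marginal gain of $g$.

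Next I would quantify the curvature of $f$ through the sparse eigenvalues of $\bK$: since $f$ is quadratic with Hessian $-2\bK$, restricted strong concavity over supports of size at most $2r$ holds with parameter $m$ (smallest $2r$-sparse eigenvalue), and restricted smoothness for single-coordinate additions holds with parameter $M$ (largest $(r+1)$-sparse eigenvalue). The assumption that $\cH$ is finite dimensional with bounded norm is what guarantees these constants are finite and that $m>0$, so the restricted inverses are well defined. The heart of the argument is a per-step progress lemma, which is really a bound on the submodularity ratio of $g$. Let $\sfS_t$ be the greedy set after $t$ steps and $\vw^{(t)}$ the corresponding optimizer, so that $\nabla f(\vw^{(t)})$ vanishes on $\sfS_t$. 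Restricted smoothness lower-bounds the best single-coordinate gain by $\frac{1}{2M}\max_j(\nabla f(\vw^{(t)})_j)^2 \ge \frac{1}{2Mr}\sum_{j\in\sfS^\star\setminus\sfS_t}(\nabla f(\vw^{(t)})_j)^2$, while restricted strong concavity, applied to the comparison with the optimum on $\sfS^\star$, upper-bounds the suboptimality by $g(\sfS^\star)-g(\sfS_t)\le\frac{1}{2m}\sum_{j\in\sfS^\star\setminus\sfS_t}(\nabla f(\vw^{(t)})_j)^2$. Combining gives $g(\sfS_{t+1})-g(\sfS_t)\ge\frac{m}{Mr}\bigl(g(\sfS^\star)-g(\sfS_t)\bigr)$, i.e. the gap contracts by $1-\frac{m}{Mr}$ each step, which encodes submodularity ratio $\gamma\ge m/M$. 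This is where I expect the main obstacle: one must track the two distinct sparsity levels ($2r$ for the strong-concavity comparison between the greedy and optimal supports, $r+1$ for the single-coordinate smoothness bound), use the vanishing of the gradient on $\sfS_t$ to restrict the sums to $\sfS^\star\setminus\sfS_t$, and ensure the strong-concavity comparison remains valid at the stated eigenvalue even as the greedy support grows beyond size $r$.

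Finally I would telescope the contraction. Since $\sfS_G=\sfS_k$ and $g(\emptyset)=0$, iterating yields $g(\sfS^\star)-g(\sfS_G)\le\bigl(1-\frac{m}{Mr}\bigr)^k g(\sfS^\star)\le e^{-mk/(Mr)}\,g(\sfS^\star)$. Requiring the prefactor to be at most $\epsilon$ is equivalent to $k\ge\frac{M}{m}r\log\frac1\epsilon$, and translating back through $v=\mu_p-g$ gives $v(\sfS_G)-v(\sfS^\star)\le\epsilon\,(v(\emptyset)-v(\sfS^\star))$, as claimed.
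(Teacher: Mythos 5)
Your proposal is correct, and in substance it travels the same road as the paper --- reduce the SBQ objective to cardinality-constrained least squares and control greedy progress via the sparse spectrum of $\bK$ --- but it packages the argument quite differently, and in a more self-contained way. The paper gets the reduction through two lemmas (the posterior variance equals $\mathrm{MMD}^2(p,q)$, and the BQ weights realize the orthogonal projection of $\mu_p$ onto the span of the selected feature maps), and then outsources the combinatorial core entirely: it cites Das--Kempe for the fact that sparse regression is $\frac{m}{M}$-weakly submodular with exactly these sparse-eigenvalue constants, and Elenberg et al. for the $\bigl(1-e^{-\gamma k/r}\bigr)$ forward-greedy guarantee, then translates back via $g(\sfS)=v(\emptyset)-v(\sfS)$. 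You instead (i) collapse both reduction lemmas into the single variational identity $\bz_\sfS^\top\bK_{\sfS\sfS}^{-1}\bz_\sfS=\max_{\mathrm{supp}(\vw)\subseteq\sfS}\bigl(2\bz^\top\vw-\vw^\top\bK\vw\bigr)$, which encodes the orthogonal-projection fact without any RKHS/MMD language, and (ii) inline a direct proof of what the paper cites: the restricted strong-concavity/smoothness per-step contraction $g(\sfS_{t+1})-g(\sfS_t)\ge\frac{m}{Mr}\bigl(g(\sfS^\star)-g(\sfS_t)\bigr)$, followed by telescoping (your factor-of-two conventions on the two curvature constants differ from the Hessian $-2\bK$ normalization, but they cancel in the ratio, so the contraction rate is unaffected). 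What your route buys is transparency and self-containedness --- one sees exactly where $m$, $M$, the $1/r$ averaging, and the vanishing gradient on $\sfS_t$ enter. What the paper's route buys is that, having certified the objective as weakly submodular, the guarantees for the scalable variants in Table~\ref{tab:algorithms} (matching pursuit, stochastic, distributed greedy) follow by merely swapping the cited greedy lemma, which is precisely how the paper justifies them. Finally, the obstacle you flag is real but is inherited from the theorem statement rather than a defect of your argument: once $t>r$, the strong-concavity comparison between $\vw^{(t)}$ and $\vw^\star$ lives on a support of size up to $k+r$, so the ``smallest $2r$-sparse eigenvalue'' in the statement is not literally sufficient; the paper concedes the same point only in its post-theorem discussion, where it says the lower eigenvalue condition must hold ``over the greedy selection set plus any $r$ sized subset.''
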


\paragraph{Discussion:} Theorem~\ref{thm:convergence} provides exponential convergence for the cost function $v(\cdot)$. For the same objective, using Frank-Wolfe on the marginal polytope, the best known guarantees in the most general case are $O(1/T)$ for finite dimensional bounded Hilbert spaces~\citep{Bach2012Herding}. In the special case when the optimum lies in the relative interior, we do get faster exponential convergence. Theorem~\ref{thm:convergence} provides an alternative condition that is sufficient for exponential convergence for the case when the optimum $\mu_p$ lies at the boundary of the marginal polytope instead of in its relative interior i.e. it is linear combination of $r$ atoms. The lower sparse eigenvalue condition is a union bound, and only requires to hold over the greedy selection set plus any $r$ sized subset.

\subsection{Scalability} 
For massively large real world datasets, the standard greedy algorithm (SBQ) may be prohibitively slow. In addition to run time, there are also memory considerations. SBQ requires building and storing an $O(m^2)$ sized kernel matrix over the training set of size $m$. We can use alternative variants of the greedy algorithm that are either faster with some compromise on the convergence rate or can distribute the kernel over multiple machines. These variants are presented in Table~\ref{tab:algorithms} with their corresponding references. To the best of our knowledge, these variants have not been suggested for solving the problem~\eqref{eq:approxexpectation} before and may be of independent interest. The convergence rates are obtained similar to the proof of Theorem~\ref{thm:convergence} by plugging in respective approximation guarantees in lieu of Lemma~\ref{lem:greedy_weaksubmod_guarantee} in the appendix.

\begin{table*}
\centering
\begin{tabular}{|c|c|c|c|}\hline
Algorithm & Runtime & Memory required & Convergence rate \\\hline
SBQ (Algorithm~\ref{algo:greedy})  & $O(k^3t)$ & $O(t^2 + n)$ &  $O(\lambda  log \nicefrac{1}{\epsilon})$ \\ 
Matching Pursuit~\citep{Elenberg2016}  & $O(k^2t)$ & $O(t^2 + n)$ &  $O(\lambda  log \nicefrac{1}{\epsilon})$ \\ 
$\delta$-Stochastic Selection~\citep{Khanna2017}  & $O(kt\log \nicefrac{1}{\delta})$ & $O(t^2 + n)$ &  $O(\lambda log \nicefrac{1}{(\delta\epsilon)})$ \\ 
Distributed ($l$ machines)~\citep{Khanna2017}  & $O(\frac{kt}{l})$ & $O(\frac{t^2}{l^2} + n)$ &  $O(\lambda  log \frac{1}{\epsilon})$ \\ \hline
\end{tabular}
\caption{Greedy variants for prototype selection. $\lambda=\frac{M}{m}$, $n$ is the test set size, $t$ is the size of the training set. Convergence rate refers to number of iterations needed to get $\epsilon$ accuracy. For Stochastic and Distributed variants, the guarantee is in expectation.}
\label{tab:algorithms}
\end{table*}

\section{Relationship with Influence functions}
\label{sec:influence}
Influence functions~\citep{Cook1980Influence} have recently been proposed as a tool for interpreting model predictions~\citep{KohL17}. Since our goal is also the same, it is interesting to ask if there is a relationship between the two approaches. For selecting the most influential training point for a given test point, influence functions approximate infinitesimal upweighting of which training point has the most effect on prediction of the test point in question. In this section, we show that our method recovers this influence function approach used by~\citet{KohL17} for selecting influential training data points. In addition, we also show how adversarial training side attacks proposed by~\citet{KohL17} by perturbing features of training data points can be re-interpreted as a standard adversarial attack in the RKHS induced by the Fisher Kernel. Our analysis yields new insights about the influence function based approach and also establishes the importance of the Fisher space for robust learning. 

\subsection{Choosing training data points}
\label{sec:influence_datapoints}
We briefly introduce the influence function approach for model interpretation. For simplicity, we re-use the notation suggested by~\citet{KohL17}. Let $\ztest$ be the test data point in question, $\sfS_\text{train}$ be the training set, $L(\bz, \theta)$ be the loss function fitted on the training set, $\hat{\theta}$ be the optimizer of $L(\sfS_\text{train},\theta)$, $\bH_\theta$ be the Hessian of the loss function evaluated at $\theta$, then the most influential training data point is the solution of the optimization problem:

\begin{equation}
\label{eq:influence_costfn}
\max_{\bz \in \sfS_\text{train}} \nabla_\theta L(\bz, \hat{\theta}) \bH_{\hat{\theta}}^{-1} \nabla_\theta L(\ztest, \hat{\theta}) 
\end{equation}

We compare the two discrete optimization problems~\eqref{eq:setoptimizationproblem} and ~\eqref{eq:influence_costfn}. Even though~\eqref{eq:setoptimizationproblem} uses first order information only while~\eqref{eq:influence_costfn} uses both first order and second order information about the loss function, the following proposition illustrates a connection.

\begin{proposition}
\label{prop:influence}
If the loss function $L(\cdot)$ takes the form of a negative log-likelihood function, $[H_{\hat{\theta}}]_{ij} = \nabla_{\theta_i}  L(\sfS_{\text{train}}, \hat{\theta})^\top  \nabla_{\theta_j} L(\sfS_{\text{train}}, \hat{\theta})$, where we have overloaded the notation $L(\sfS_{\text{train}}, \theta) = \frac{1}{| \sfS_{\text{train}}|  } \sum_t L(\bz_t, \theta )$. 
\end{proposition}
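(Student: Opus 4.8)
The plan is to recognize the claimed identity as the empirical form of the \emph{information matrix equality} (Bartlett's second identity), which states that for a correctly specified model the Fisher information can be computed either as the covariance of the score or as the expected Hessian of the negative log-likelihood. First I would write the per-example loss explicitly as a negative log-likelihood, $L(\bz,\theta) = -\log p(\bz \mid \theta)$, so that $\nabla_\theta L(\bz,\theta) = -\nabla_\theta \log p(\bz\mid\theta)$ is the negative score and, for a single point, $[\bH_\theta]_{ij} = -\nabla_{\theta_i}\nabla_{\theta_j}\log p(\bz\mid\theta)$. The whole statement then reduces to showing that, evaluated at $\hat\theta$ and averaged over $\sfS_{\text{train}}$, this Hessian coincides with the averaged outer product of per-example gradients.

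The key step is the pointwise algebraic identity obtained by differentiating $\log p$ twice: writing $\nabla_{ij}^2 \log p = \nabla_{ij}^2 p / p - (\nabla_i p/p)(\nabla_j p /p)$ and using $\nabla \log p = \nabla p / p$ gives $-\nabla_{ij}^2 \log p = \nabla_i \log p\,\nabla_j\log p - \nabla_{ij}^2 p / p$, which in terms of the loss reads $\nabla_{ij}^2 L = \nabla_i L\,\nabla_j L - \nabla_{ij}^2 p/p$. It then remains to argue that the residual term $\nabla_{ij}^2 p/p$ drops out in aggregate. Here I would appeal to the normalization $\int p(\bz\mid\theta)\,d\bz = 1$: differentiating once under the integral sign yields the first Bartlett identity (the expected score vanishes), and differentiating twice yields $\int \nabla_{ij}^2 p\, d\bz = 0$, i.e. $\bbE_{p(\cdot\mid\theta)}[\nabla_{ij}^2 p / p] = 0$. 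Substituting this into the pointwise identity collapses it to $\bbE[\nabla_{ij}^2 L] = \bbE[\nabla_i L\,\nabla_j L]$, which is exactly the asserted form once the overloaded notation $\nabla_{\theta_i} L(\sfS_{\text{train}},\hat\theta)$ is read as the (suitably $1/\sqrt{N}$-scaled) vector of per-example derivatives, so that the inner product $\nabla_{\theta_i}L(\sfS_{\text{train}},\hat\theta)^\top\nabla_{\theta_j}L(\sfS_{\text{train}},\hat\theta)$ is precisely the empirical average of $\nabla_i L\,\nabla_j L$ over the training set.

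The main obstacle is that the cancellation of the residual term $\nabla_{ij}^2 p/p$ is exact only \emph{in expectation under the model distribution} $p(\cdot\mid\hat\theta)$, whereas $\bH_{\hat\theta}$ is an average over the finite empirical sample. Two ingredients are needed to bridge this gap and both should be stated as hypotheses: (i) a regularity/dominated-convergence condition permitting the interchange of $\nabla_\theta$ and $\int d\bz$, without which neither Bartlett identity holds; and (ii) identifying the empirical average over $\sfS_{\text{train}}$ with the expectation under $p(\cdot\mid\hat\theta)$, which is the standard correctly-specified, large-sample approximation at the maximum-likelihood estimate $\hat\theta$ under which the empirical and model expectations agree. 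I would therefore present the equality as the empirical Fisher (Gauss--Newton) approximation to the Hessian, noting that it becomes exact asymptotically, and emphasize that this is precisely the bridge that lets the inverse-Hessian weighting in the influence-function objective~\eqref{eq:influence_costfn} be identified with the inverse-Fisher weighting of the practical Fisher kernel.
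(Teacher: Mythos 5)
Your proposal is correct in substance but takes a genuinely different route from the paper's own proof---and the comparison is instructive, because the paper's route contains a step that does not hold up while yours gives the claim its correct standing. The paper argues from the optimality of $\hat{\theta}$: it writes the first-order condition $\nabla_{\theta_i} \sum_t -\log p(\bz_t,\hat{\theta}) = 0$, then \emph{differentiates this equation in $\theta_j$} to assert $\nabla_{\theta_j}\nabla_{\theta_i}\sum_t -\log p(\bz_t,\hat{\theta}) = 0$, expands by the quotient rule, and rearranges into $\sum_t \nabla_{\theta_i}\nabla_{\theta_j}p(\bz_t,\hat{\theta})/p(\bz_t,\hat{\theta}) = \sum_t \nabla_{\theta_i}p(\bz_t,\hat{\theta})\nabla_{\theta_j}p(\bz_t,\hat{\theta})/p(\bz_t,\hat{\theta})^2$, from which the result is read off. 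The differentiation step is a non sequitur: stationarity holds only \emph{at the point} $\hat{\theta}$, not as an identity in a neighborhood of $\hat{\theta}$, so it cannot be differentiated in $\theta$; taken at face value it asserts that the empirical Hessian itself vanishes at the optimum, which is false for any strict minimum and would collapse the proposition to the statement $0=0$. Your argument avoids this entirely: you use the pointwise identity $\nabla^2_{ij} L = \nabla_i L\,\nabla_j L - \nabla^2_{ij} p / p$ and kill the residual term via the twice-differentiated normalization $\int \nabla^2_{ij}\, p(\bz\mid\theta)\, d\bz = 0$, which is the classical information-matrix (Bartlett) identity. What your route buys is an honest account of where the content lies: the cancellation is exact only in expectation under the model, so the equality of empirical Hessian and empirical Fisher is an approximation---exact asymptotically under correct specification---rather than a finite-sample identity, and the two hypotheses you isolate (regularity permitting differentiation under the integral, and identification of the empirical average over $\sfS_{\text{train}}$ with the model expectation at $\hat{\theta}$) are precisely what the paper's derivation leaves implicit. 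Since the downstream use of this proposition (identifying the influence-function objective with the practical Fisher kernel, and rendering the Hessian computation redundant) only requires the approximate identity, your formulation supports everything the paper builds on it, and it is the defensible form of the statement.
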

\begin{proof}
Let $L(\bz, \theta) := - \log p(\bz, \theta)$, since it takes form of a negative LL function. Then, since $\hat{\theta}$ is the optimizer of $L(\sfS_{\text{train}}, \theta) $,
\begin{align*}
 & \nabla_{\theta_i} L(\sfS_{\text{train}}, \hat{\theta}) = 0 \\
\implies & \nabla_{\theta_i} \sum_t - \log p (\bz_t, \hat{\theta}) = 0 \\
\implies & \nabla_{\theta_j} \nabla_{\theta_i} \sum_t - \log p (\bz_t, \hat{\theta}) = 0 \\
\implies & \nabla_{\theta_j} \sum_t \frac{-1}{p (\bz_t, \hat{\theta}) } \nabla_{\theta_i}  p (\bz_t, \hat{\theta})  = 0 \\
\implies & \sum_t \frac{\nabla_{\theta_i} \nabla_{\theta_j} p (\bz_t, \hat{\theta}) }{p (\bz_t, \hat{\theta})} = \sum_t \frac{\nabla_{\theta_i}p (\bz_t, \hat{\theta})^\top \nabla_{\theta_j} p (\bz_t, \hat{\theta}) }{p (\bz_t, \hat{\theta})^2},
\end{align*}
from which the result directly follows. 
\end{proof}

From Proposition 1, it is easy to see that the optimization problems~\eqref{eq:setoptimizationproblem} and~\eqref{eq:influence_costfn} are the same under some conditions. To be more precise, we can make the following statement. If the cost function $L(\cdot,\cdot)$ is in the form of a negative log-likelihood function,~\eqref{eq:influence_costfn} is a special case of~\eqref{eq:setoptimizationproblem} with the practical Fisher kernel (see Section~\ref{sec:fisher}) when the test set is of size 1, and $r=1$. 

This equivalence gives several insights about influence functions that were not known before: (1) it generalizes influence functions to multiple data points for both test and training sets in a principled way and provides a probabilistic foundation to the method, (2) it establishes the importance of the induced RKHS by the Fisher kernel by re-interpreting the influence function optimization problem as $\min_{\bz \in \sfS_{\text{train}} } \| \ztest - \bz \|_\cH$ (see Lemma~\ref{lem:mmd_bq} in the appendix), (3) for negative LL functions, it renders the expensive the calculation of the Hessian in the work by~\citet{KohL17} as redundant since by Proposition~\ref{prop:influence}, first order information suffices, (4) it provides theoretical approximation guarantees (see Lemma~\ref{lem:greedy_weaksubmod_guarantee} in the appendix) for selection of multiple training data points, in constrast to ~\citet{KohL17} who made multiple selections greedily only as a heuristic.  

\subsection{Unified view of adversarial attacks}
Given a test data point $\bz$, an adversarial example is generated by adding a small perturbation as $\tilde{\bz} = \bz + \epsilon_z$, where $\epsilon_z$ is a small perturbation of $\bz$ so that for $\tilde{\bz}$ is indistinguishable from $\bz$ by a human, but causes the model to make an incorrect prediction on $\bz$~\citep{Goodfellow2014ExplainingAH}. For training data attacks, $\bz$ is a training data point that is perturbed to make an incorrect prediction on a test data point. For a loss function $\ell(\bz)$, a test side attack for perturbing a test data point $\ztest$ would solve the optimization problem:

\begin{equation}
\label{eq:testsideattack}
\max_{\| \bz - \ztest\|_\infty \leq \epsilon} \ell(\bz)
\end{equation}

While the optimization~\eqref{eq:testsideattack} is hard in general, typically a few iterations of projected gradient ascent or FGSM are applied. We refer to the recent work by~\citet{Madry2018Towards} for details. 

For training side attacks,~\citet{KohL17} perform the following iterative update: 

\begin{equation}
\label{eq:trainsideattack_influencefn}
\tilde{\bz} \leftarrow  \Pi(\tilde{\bz} + \alpha \text{sign} (\cL(\tilde{\bz}, \ztest))),
\end{equation}

where $\bz=(x,y)$ is a candidate training example to perturb in $x$, $\ztest$ is the target test example, $\Pi$ is the projection operator onto the set of valid images, $\alpha$ is a fixed step size, and $\cL(\tilde{\bz}, \ztest):= \nabla_\theta L(\bz, \hat{\theta}) \bH_{\hat{\theta}}^{-1} \nabla_x \nabla_\theta L(\ztest, \hat{\theta}) $. 

Using the results in Section~\ref{sec:influence_datapoints}, it is straightforward to see that the if we use $\ell(\bz) = \| \ztest - \bz \|_\cH$, where $\cH$ is the RKHS induced by the practical Fisher kernel, and change the constraint as a perturbation over a training example instead of the test example, we recover the iterative step~\eqref{eq:trainsideattack_influencefn} as a special case of projected gradient ascent steps to solve~\eqref{eq:testsideattack}. 

This equivalence provides a unified view of both training and test side attacks. As such, the large literature on robust learning against test side attacks can be applied to robustness against training side attacks as well. Moreover our framework also provides a principled way to do training side attacks to target multiple test set examples, instead of attacking individual test points separately.
\section{Experiments}
\label{sec:experiments}
We present empirical use cases of our framework. We chose the experiments to illustrate the flexibility of our framework, as well as to emphasize its generalization capacity over and above influence functions. As such, we present experiments that make use of set influence (as opposed to single data point influence) for data cleaning and summarization (Sections~\ref{sec:expts_dataCleaning},\ref{sec:exp_summarization}). To illustrate potential benefit of using the full Fisher kernel as opposed to the simplified practical Fisher kernel as used by the influence functions, we present evaluation for a use case for fixing mislabelled examples as presented by~\citet{KohL17} (Section~\ref{sec:exp_mislabeled}).

\subsection{Data Cleaning: removing malicious training data points}
\label{sec:expts_dataCleaning}
In this section, we present experiments on the MNIST dataset to illustrate the effectiveness of our method  in interpreting model behavior for the test population. Some of the handwritten digits in MNIST  are hard even for a human to classify correctly. Such points can adversely affect the training of the classifier, leading to lower predictive accuracy. Our goal in this experiment is to try to identify some such misleading training data points, and remove them to see if it improves predictive accuracy. To illustrate the flexibility of our approach, we focus only on the digits $4$ and $9$ in the test data which were misclassified by our model, and then select the training data points responsible for those misclassifications. 

The MNIST data set~\citep{lecunMNIST} consists of images of handwritten digits and their respective labels. Each image is a $28\times 28$ pixel array. There are $70000$ images in total, split into $60000$ training examples and $10000$ test examples. The $10$ digits are about evenly represented in both the training and the test data. 

For the classification task, we use tensorflow~\citep{tensorflow} to build a 2 layer convolutional network with $2\times 2$ max pooling followed by a fully connected layer and the softmax layer. The convolutions use a stride of $1$ followed by padding of zeros to match the input size. We use dropout to avoid overfitting. The network was trained using the built-in Adam Optimizer for $20000$ steps of batch size $100$ each. For the entire test set, we obtain an accuracy of $0.9922$, while for the subset of the test set consisting only of the chosen two digits $4$ and $9$, the accuracy is $0.9889$. 


\begin{figure}
\centering

\begin{subfigure}[t]{0.49\textwidth}
\centering
\includegraphics[scale=0.35]{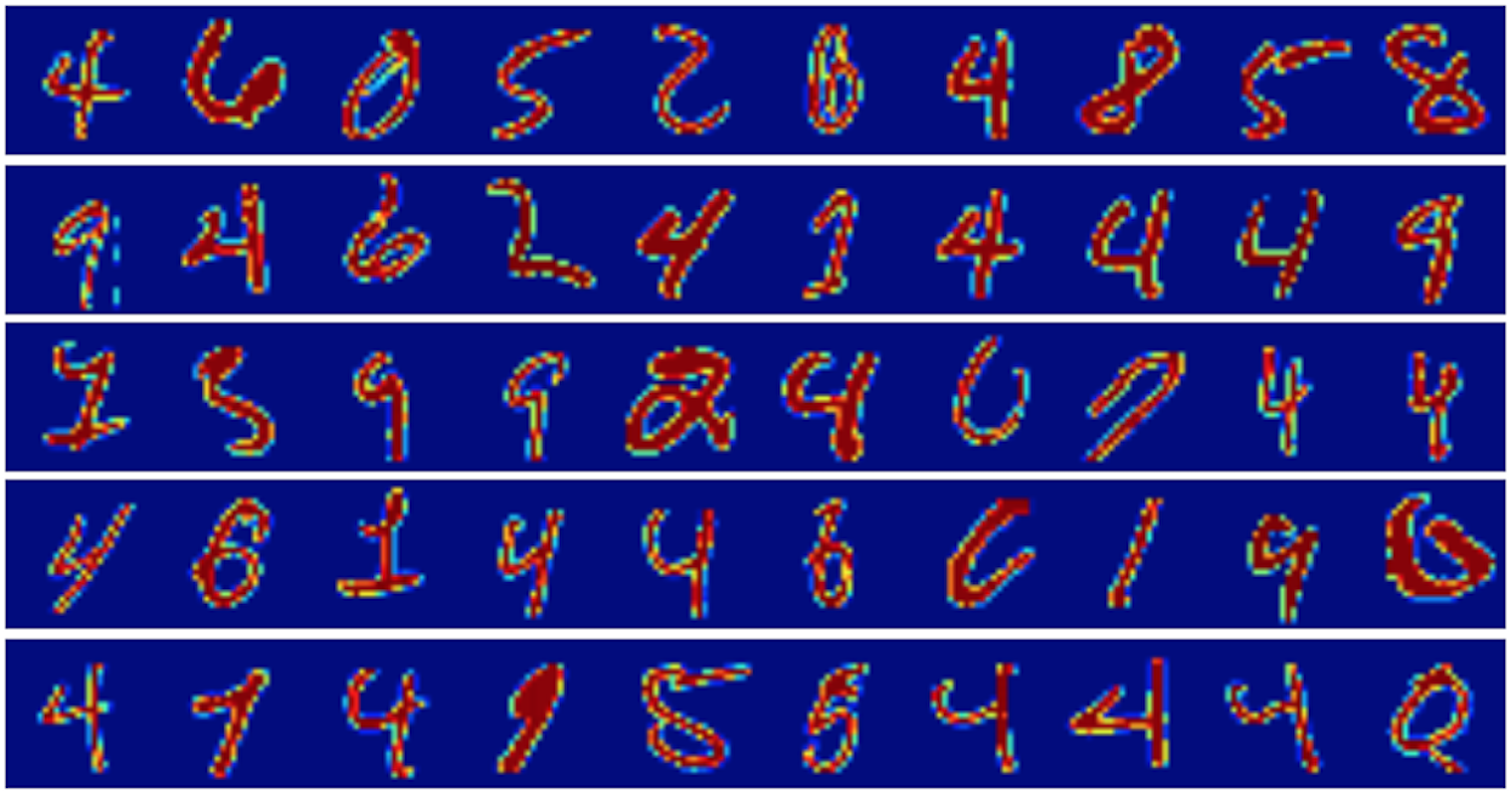}
\caption{A subset of selected prototypes responsible for misclassifying $4$s and $9$s in the test set}
\label{fig:subfig_curationset}
\end{subfigure}

\begin{subfigure}[t]{0.45\textwidth}
\centering
\includegraphics[scale=0.38]{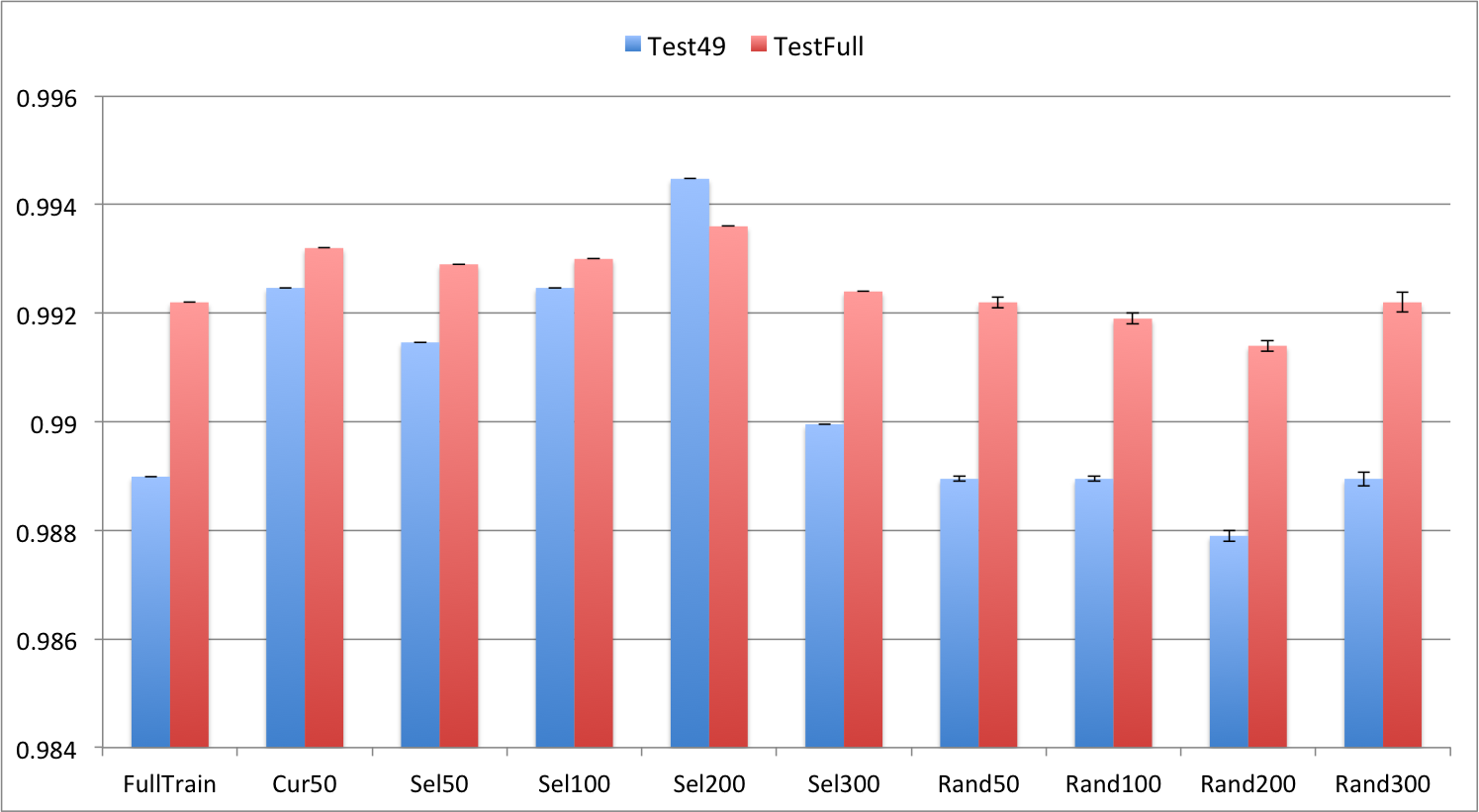}
\caption{Accuracy fractions on test data $4$s and $9$s (Test49), and the full test set after removing random (Rand), algorithm selected (Sel), or Curated (Cur) prototypes.  }
\label{fig:subfig_mnistaccuracy}
\end{subfigure}
\caption{MNIST experiment for selecting malicious training data points.}
\label{fig:mnistexperiment}
\end{figure}

After the training is completed, we obtain the gradients of the training and test data points w.r.t the parameters of the network by passing  each point through the trained (and subsequently frozen) network. The obtained gradient vectors are used to calculate the Fisher kernel as detailed in Section~\ref{sec:fisher}. We then employ Algorithm~\ref{algo:greedy} using the newly built Fisher kernel matrix between training and test datasets to obtain the top $300$ \emph{prototypes} i.e. data points from the training set that our algorithm deems most responsible for misclassifying $4$s and $9$s. 

To check if these points are indeed misguiding the model, we remove the top $50,100,200,300$ of the selected points from the training data and retrain the model to retest on the test set. These numbers are reported as Sel50, Sel100, Sel200, Sel300 in Figure~\ref{fig:subfig_mnistaccuracy}. Indeed we see an improvement in the test accuracy till Sel200 indicating the importance of removing the selected potential malicious points from the training set, and a subsequent decay in performance for Sel300 most likely due to removal of too many useful points in addition to malicious ones. To compare, we also remove the respective number of points randomly and repeat the experiment. Removal of random points from the training data led to a general decay in the predictive accuracy. 

Finally, we manually selected $50$ points from the chosen $300$ points as the curated set based on how ill-formed the digits were (see Figure~\ref{fig:subfig_curationset}). Removing these points from the training set before re-training and testing gives predictive accuracy is reported as Cur50 comparable to Sel100, but still worse than Sel200, indicating that the algorithm identified more malicious points in top-200 selected than our manually chosen $50$ points. 
%

\subsection{Fixing Mislabeled Examples}
\label{sec:exp_mislabeled}

\begin{figure}
\centering
\includegraphics[scale=0.58]{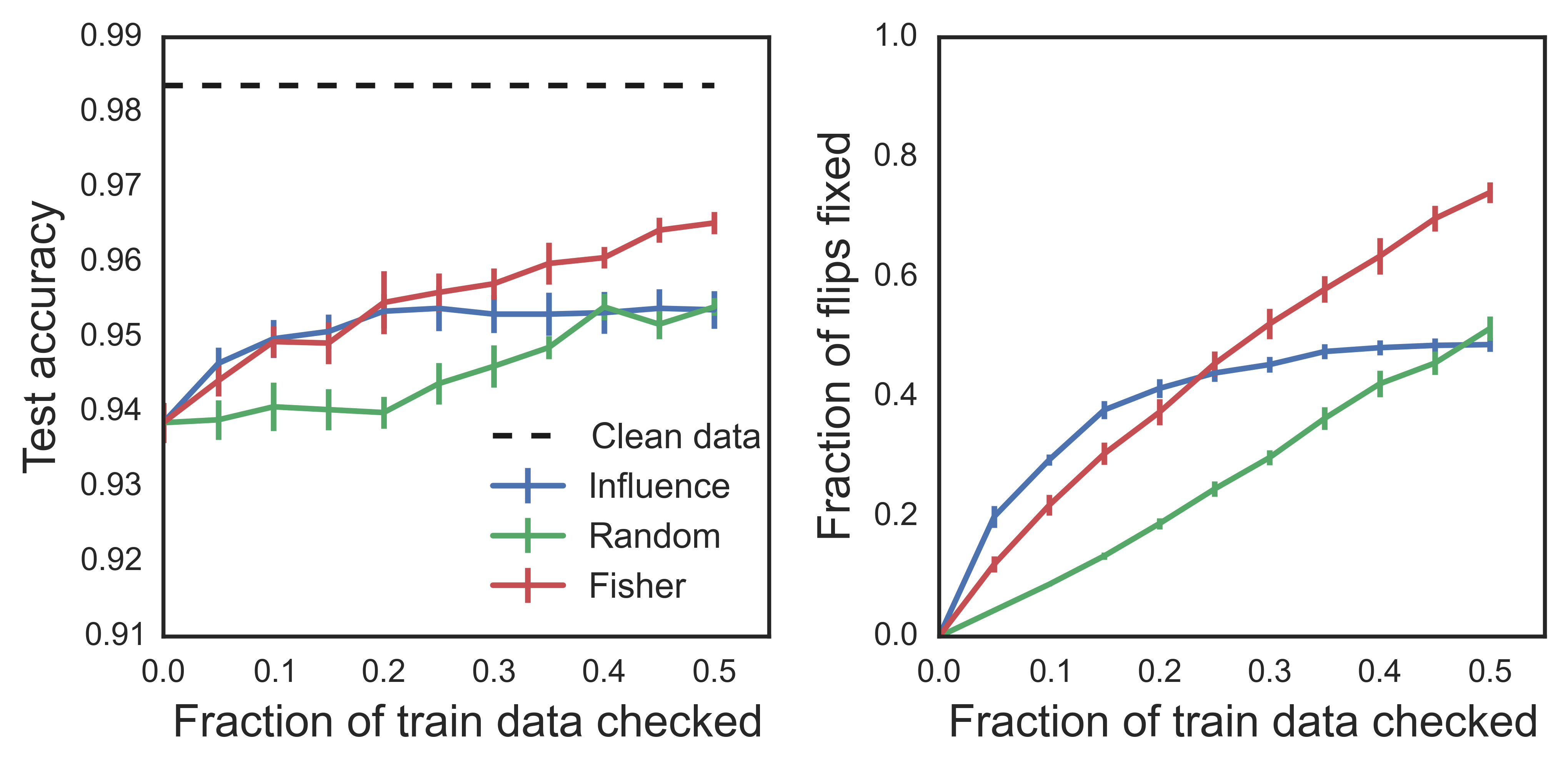}
\caption{Comparison of SBQ compared to Influence functions on the task of fixing flipped labels.}
 \label{fig:spam}
\end{figure}

%

In this experiment, we use our framework to detect and fix mislabeled examples. Labor intenstive labeling tasks naturally result in mislabels, especially in real-world datasets. These data points may cause poor performance and degradation of the model. We show that our method can be successfully used for this purpose, showing improvement over the recent results by~\citet{KohL17}. 


We use a small correctly labeled validation set to identify examples from the large training set that are likely mislabeled. 
We first train a classifier on the noisy training set, and predict on the validation set. We then employ Algorithm~\ref{algo:greedy} to identify training examples that were responsible for making incorrect predictions on the validation set.
The potentially mislabeled data points are then chosen by the output of our method.
Curation is then simulated on the selected examples in order of selections made (similar to the approach by~\citet{KohL17}), and if the label was indeed wrong, it is fixed. We report on the number of training data points selected vs fixed (the precision metric for incorrectly labeled points) and the respective improvement in unseen test data accuracy.


For evaluation, we use \texttt{enron1} email spam dataset used by~\citet{KohL17} and compare our results to their reported results.
The dataset consists of $4137$ training points and $1035$ test points. We randomly select $500$ data points from the training set as the clean \emph{curated} data. From the remaining training data points, we randomly flip the labels of $20\%$ of the data. We then use our method and the baselines to select several candidates for curation. We report the number of fixes made after these selections and the corresponding test predictive accuracy. The baselines are selection by (i) top self influence measures~\citep{KohL17}, and (ii) random selection of datapoints. The curation data is used as part of the training by all the methods. No method had access to the test data. As showing in Figure~\ref{fig:spam}, our algorithm consistently performs better in test accuracy and the fraction of flips fixed as more and more data is curated.


\subsection{Data Summarization}
\label{sec:exp_summarization}
In this section, we perform the task of training data summarization. Our goal is to select a few data samples that represent the data distribution \emph{sufficiently} well, so that a model built on the selected subsample of the training data does not degrade too much in performance on the unseen test data. This task is complimentary to the task of interpretation, wherein one is interested in selecting training samples that explain some particular predictions on the test set. Since we are interested in approximating the test distribution using a few samples from a training set with the goal of predictive accuracy under a given model, our framework of Sequential Bayesian Quadrature using Fisher kernels is directly applicable. 

Another method that also aims to do training data summarization is that of coreset selection~\cite{HugginsCB16}, albeit with a different goal of reducing the training data size for optimization speedup while still maintaining guaranteed approximation to the training likelihood. Since the goal itself is optimization speedup, coreset selection algorithms typically employ fast methods while still trying to capture the data distribution by proxy of the training likelihood. Moreover, the coreset selection algorithm is usually closely tied with the respective model as opposed to being a model-agnostic method like ours. 

To illustrate that coreset selection falls short on the goal of competitively estimating the data distribution, we employ our framework to the problem of training data summarization under logistic regression, as considered by~\citet{HugginsCB16} using coreset construction. We experiment using two datasets \texttt{ChemReact} and \texttt{CovType}. \texttt{ChemReact} consists of $26733$ chemicals each of feature size $100$. Out of these,  $2500$ are test data points. The prediction variable is $0/1$ and signifies if a chemical is reactive. \texttt{CovType} has $581012$ webpages each of feature size $54$. Out of these, $29000$ are test points. The task is to predict whether a type of tree is present in each location or not. 

In each of the datasets, we further randomly split the training data into $10\%$ validation and $90\%$ training. For the larger \texttt{CovType} data, we note that selecting about 20,000 training points out of the training set achieves about the same  performance as the full set. Hence, we work with randomly selected 20,000 points for speedup. We train the logistic regression model on the new training data, and use the validation set as a proxy to the unseen test set. We build the kernel matrix $\bK$ and the affinity vector $\bz$, and run Algorithm~\ref{algo:greedy} for various values of $k$. For the baselines, we use the coreset selection algorithm and  random data selection as implemented by~\citet{HugginsCB16}. The results are presented in Figure~\ref{fig:coresets}. We note that our algorithm yields a significantly better predictive performance compared to random subsets and coresets~\cite{HugginsCB16} with the same size of the training subset across different subset sizes.

\begin{figure}
\centering
\begin{subfigure}[t]{0.45\textwidth}
\centering
 \includegraphics[scale=0.3]{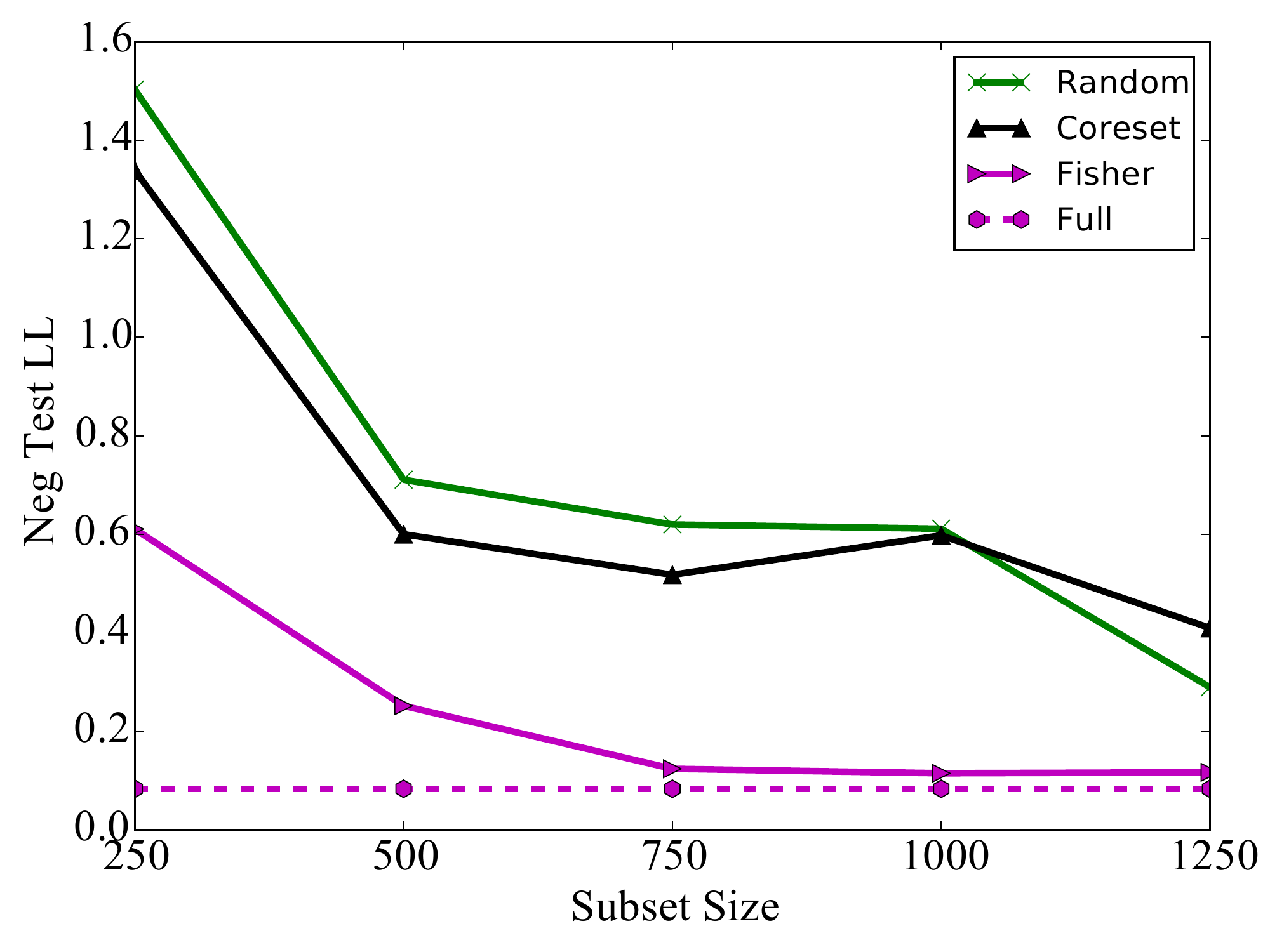}
 \end{subfigure}
 \begin{subfigure}[t]{0.45\textwidth}
 \centering
 \includegraphics[scale=0.3]{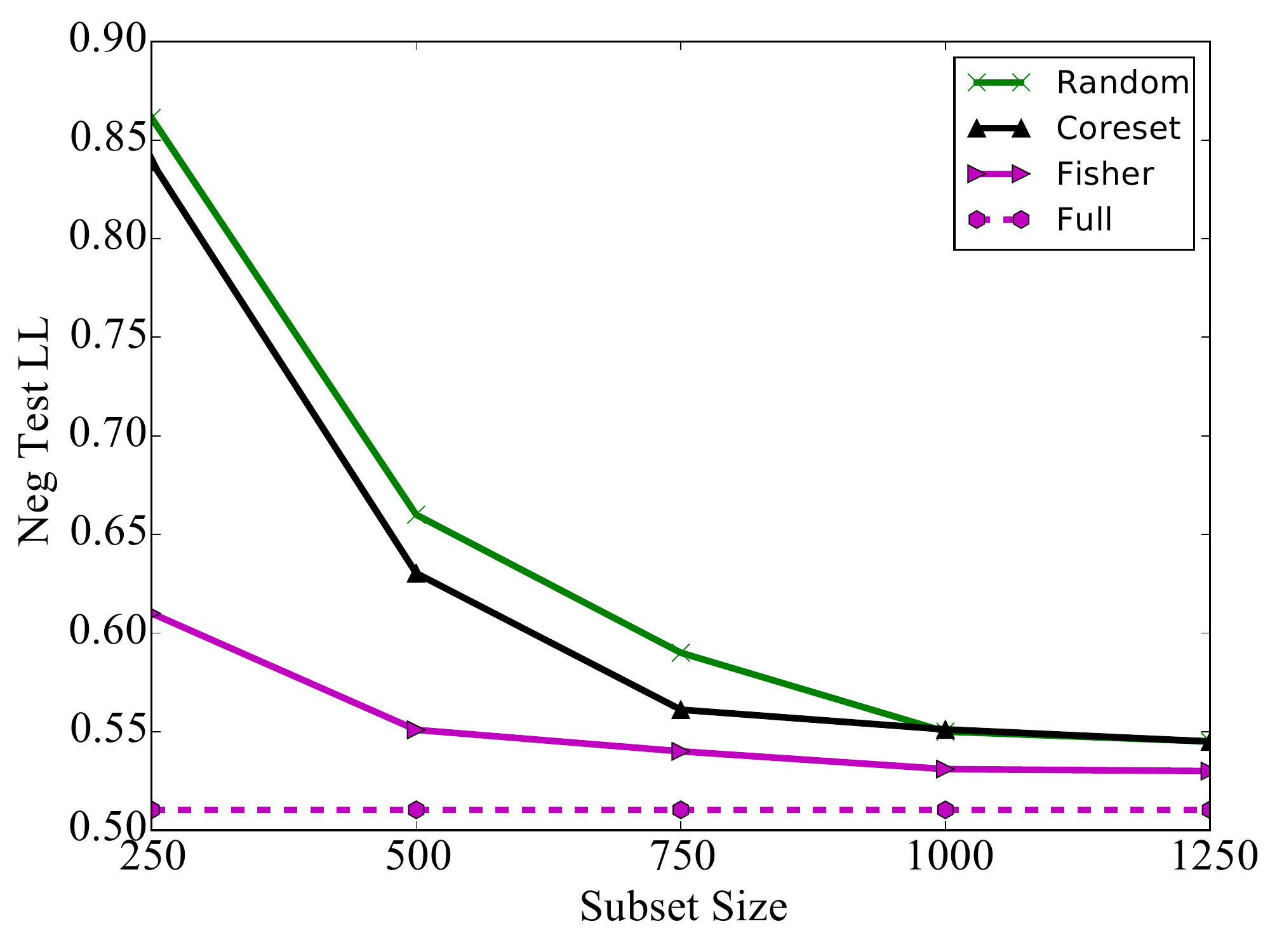}
 \end{subfigure}
 \caption{Performance for logistic regression over two datasets (left is \texttt{ChemReact} while right is \texttt{CovType}) of our method (Fisher) vs coreset selection~\citep{HugginsCB16} and random data selection. `Full' reports the numbers for training with the entire training set. Fisher achieves much better test LL performance than the baselines over several different subset sizes.}
 \label{fig:coresets}
\end{figure}

\paragraph{Conclusion:} 
This manuscript proposed a novel principled approach for examining sets of training examples that influence an entire test set given a trained black-box model -- extending a notable recently proposed per-example influence to set-wise influence. We also presented novel convergence guarantees for SBQ and more scalable algorithm variants.. Empirical results were presented to highlight the utility of the proposed approach for black-box model interpretability and related tasks.
For future work, 
we plan to investigate the use of model criticisms to provide additional insights into the trained models.

\bibliographystyle{plainnat}

\bibliography{bibliography}
\clearpage
\appendix
\section{Appendix}

\subsection{Proof of Theorem~\ref{thm:convergence}}.
Our proof follows the following sketch. We show that the given problem can be written as a linear regression problem in the induced RKHS. The greedy SBQ algorithm to choose data points is then equivalent to forward greedy feature selection in the transformed space (Lemma~\ref{lem:mmd_bq}). After the selection is made, the weight optimization obtained through the posterior calculation ensures orthogonal projection (Lemma~\ref{lem:orthoproj}) which means the posterior calculation is nothing but fitting of the least squares regression on the chosen set of features. Finally we draw upon research in discrete optimization to get approximation guarantees for greedy feature selection for least squares regression (Lemma~\ref{lem:greedy_weaksubmod_guarantee}) that we use to obtain the convergence rates.

We will require the following definition of the Maximum Mean Discrepancy (MMD). MMD is a divergence measure between two distributions $p$ and $q$ over a class of functions $\cF$. We restrict our attention to cases when $\cF$ is a Reproducing Kernel Hilbert Space (RKHS), which allows MMD evaluation based only on kernels, rather than explicit function evaluations. 

\[
\text{MMD}_\cF(p,q) := \sup_{f \in \cF} \left| \int f(\bx) p(\bx)d\bx - \int f(\bx) q(\bx)d\bx \right| 
\]

If the sup is reached at $\phi$, 
\begin{align*}
\text{MMD}_\cF(p,q)^2 &= \left| \int \phi(\bx) p(\bx)d\bx - \int \phi(\bx) q(\bx)d\bx\right|^2 \\
&= \| \mu_p -\mu_q \|_\cF^2 ,
\end{align*}

where $\mu_p$ and $\mu_q$ are the mean function mappings under $p$ and $q$ respectively. 

We make use of the following lemma that establishes a connection between MMD and Bayesian Quadrature. 

\begin{lemma}~\citep{Huszar2012OptimallyWeightedHI} \label{lem:mmd_bq} Let $q$ be the distribution established by weights $w_i$ of the Bayesian Quadrature over the selected points. Then, the expected variance of the weighted sum in Bayesian Quadration~\eqref{eq:sbq_costfunction} is equal to $\text{MMD}^2(p,q)$. 
\end{lemma}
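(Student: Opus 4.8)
The plan is to expand the squared MMD directly in the RKHS and then substitute the closed form of the Bayesian Quadrature weights to recover~\eqref{eq:sbq_costfunction}. Writing $q$ for the (signed) measure that places mass $w_i$ at each selected point $\bx_i$, its mean mapping is $\mu_q = \sum_i w_i k(\cdot, \bx_i)$, while $\mu_p = \int k(\cdot,\bx) p(\bx)\,d\bx$. Since the supremum defining $\text{MMD}_\cF(p,q)$ is attained at $\phi = (\mu_p - \mu_q)/\norm{\mu_p - \mu_q}_\cF$, we have $\text{MMD}_\cF(p,q)^2 = \norm{\mu_p-\mu_q}_\cF^2$, so it suffices to expand this norm.

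First I would split the squared norm into three inner products and evaluate each using the reproducing property $\langle k(\cdot,\bx), k(\cdot,\by)\rangle_\cF = k(\bx,\by)$:
\begin{align*}
\langle \mu_p, \mu_p\rangle_\cF &= \iint k(\bx,\by)\, p(\bx) p(\by)\, d\bx\, d\by, \\
\langle \mu_p, \mu_q\rangle_\cF &= \sum_i w_i \bz_i = \bw^\top \bz, \\
\langle \mu_q, \mu_q\rangle_\cF &= \sum_{i,j} w_i w_j\, k(\bx_i, \bx_j) = \bw^\top \bK \bw,
\end{align*}
where $\bz_i = \int k(\bx,\bx_i) p(\bx)\,d\bx$ and $\bw$ collects the weights $w_i$. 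This gives $\text{MMD}_\cF(p,q)^2 = \iint k(\bx,\by)p(\bx)p(\by)\,d\bx\,d\by - 2\bw^\top \bz + \bw^\top \bK \bw$.

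Next I would substitute the Bayesian Quadrature weights $\bw = \bK^{-1}\bz$ (equivalently $w_i = \sum_j \bz_j [\bK^{-1}]_{ij}$, as derived from the posterior mean in the background section). Using symmetry of $\bK^{-1}$, the cross term becomes $\bw^\top\bz = \bz^\top \bK^{-1}\bz$ and the quadratic term becomes $\bw^\top \bK \bw = \bz^\top \bK^{-1}\bK\bK^{-1}\bz = \bz^\top \bK^{-1}\bz$, so the last two terms collapse to a single $-\bz^\top\bK^{-1}\bz$. This yields exactly $\text{var}(Z(\sfS_n))$ as in~\eqref{eq:sbq_costfunction}, proving the claim.

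The computation is routine; the one point worth flagging is that the optimal quadrature weights need be neither nonnegative nor sum to one, so $q$ must be read as a signed measure and the MMD functional extended to it — harmless, since MMD is defined through a linear difference of integrals. I would also make explicit that the stated equality holds for the specific weight choice $\bw = \bK^{-1}\bz$ (the posterior-mean weights), not for arbitrary weights; it is precisely this choice that makes the $-2\bw^\top\bz + \bw^\top\bK\bw$ terms telescope into a single quadratic form.
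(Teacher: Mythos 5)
Your proof is correct and takes essentially the same route as the paper: the three-term kernel expansion of $\norm{\mu_p - \mu_q}_\cF^2$ you write is exactly the display the paper gives immediately after the lemma statement, and the remaining step---substituting the posterior weights $\bw = \bK^{-1}\bz$ so that $-2\bw^\top\bz + \bw^\top\bK\bw$ collapses to $-\bz^\top\bK^{-1}\bz$, recovering \eqref{eq:sbq_costfunction}---is precisely the computation the paper delegates to the citation of \citet{Huszar2012OptimallyWeightedHI}. Your two caveats (reading $q$ as a signed measure, and the equality holding only for the posterior-mean weights) are both accurate, and it is worth noting that your identity $\bw^\top\bK\bw = \bw^\top\bz$ is exactly the orthogonality relation $\langle \mu_p - \mu_q, \mu_q \rangle_\cF = 0$ that the paper establishes separately as Lemma~\ref{lem:orthoproj}.
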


We can make this explicit in our notation. If $\cF$ is an RKHS, we can write the MMD cost function using only the kernel function $K(\cdot, \cdot)$ associated with the RKHS~\cite{gretton2008kernel} as:-

\begin{align*}
\text{MMD}_\cF(p,q)^2 &=  \int_{\bx \sim p}\int_{\by \sim p} K(\bx, \by) p(\bx)p( \by)d\bx d\by  \\ 
&- 2 *\int_{\bx \sim p} \int_{\by \sim q} K(\bx,\by) p(\bx) q(\by)d\bx \by  \\
&+  \int_{\bx \sim q} \int_{\by \sim q} K(\bx,\by) q(\bx) q(\by)d\bx \by \\
&= \| \mu_p -\sum_i w_i \phi(\bx_i) \|_\cF^2 ,
\end{align*}

where, $\phi(\cdot)$ represents the feature mapping under the kernel function $K(\cdot, \cdot)$, and $i$ ranges over the selected points that define our discrete distribution $q$. Recall that Bayesian Quadrature deviates from simple kernel herding by allowing for and optimizing over non-uniform weights $w_i$. We can formally show that the weight optimization obtained through the posterior calculation performs an orthogonal projection of $\mu_p$ onto the span of selected points to get $\mu_q$ in the induced kernel space. 

\begin{lemma}\label{lem:orthoproj} The weights obtained  $w_i$ through the posterior evaluation of $Z(\sfS_n)$ guarantee that $ \sum_i w_i \phi(\bx_i)$ is the orthogonal projection of $\mu_p$ onto span($\phi(\bx_i)$.
\end{lemma}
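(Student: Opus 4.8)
The plan is to show that the vector of Bayesian-Quadrature posterior weights solves the normal equations characterizing the orthogonal projection of $\mu_p$ onto $V := \text{span}\{\phi(\bx_1),\ldots,\phi(\bx_n)\}$, and then invoke uniqueness of the projection. First I would recall the standard Hilbert-space characterization: the orthogonal projection of $\mu_p$ onto the finite-dimensional subspace $V$ is the unique element of the form $\sum_i w_i \phi(\bx_i)$ whose residual $\mu_p - \sum_i w_i \phi(\bx_i)$ is orthogonal to each $\phi(\bx_j)$; equivalently, the coefficients $w_i$ minimize $\norm{\mu_p - \sum_i w_i \phi(\bx_i)}_\cH^2$. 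This reduces the lemma to verifying that the BQ weights satisfy these orthogonality conditions.

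Next I would evaluate the two families of inner products entering those conditions. Using the reproducing property together with the definition $\mu_p = \int \phi(\bx)\, p(\bx)\, d\bx$, we obtain $\inner{\mu_p}{\phi(\bx_j)}_\cH = \int k(\bx,\bx_j)\, p(\bx)\, d\bx = \bz_j$, while $\inner{\phi(\bx_i)}{\phi(\bx_j)}_\cH = k(\bx_i,\bx_j) = \bK_{ij}$. Substituting these into $\inner{\mu_p - \sum_i w_i \phi(\bx_i)}{\phi(\bx_j)}_\cH = 0$ for every $j$ and collecting terms gives, in matrix form, $\bK \vw = \bz$ (the symmetry of $\bK$ lets me read $\sum_i w_i \bK_{ij}$ as $(\bK\vw)_j$). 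When $\bK$ is invertible this has the unique solution $\vw = \bK^{-1}\bz$.

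Finally I would match this against the weights produced by the quadrature posterior. From the expression $\bbE[Z(\sfS_n)] = \bz^\top \bK^{-1}\bff$ derived in the background section, the quadrature weights are $w_i = \sum_j \bz_j [\bK^{-1}]_{ij}$, i.e. $\vw = \bK^{-1}\bz$ by symmetry of $\bK^{-1}$. Since these coincide exactly with the projection coefficients just computed, the element $\sum_i w_i \phi(\bx_i)$ is precisely the orthogonal projection of $\mu_p$ onto $V$, establishing the claim.

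The argument is essentially a translation of the Gaussian-process posterior-mean formula into feature-space inner-product language, so there is no genuine analytic obstacle; the only point requiring care is the invertibility of $\bK$ (equivalently, linear independence of the $\phi(\bx_i)$), which is what makes both the projection coefficients and the quadrature weights unique and equal. If one only assumes $\bK$ positive semidefinite, I would instead argue that every solution of the normal equations $\bK\vw = \bz$ yields the same element $\sum_i w_i \phi(\bx_i)$, so that the projection—and hence the conclusion—remains well defined even when the weight vector itself is not.
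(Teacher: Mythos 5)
Your proposal is correct and follows essentially the same route as the paper's proof: both reduce the claim to the orthogonality (normal-equation) conditions $\inner{\mu_p - \sum_i w_i \phi(\bx_i)}{\phi(\bx_j)}_\cH = 0$, use the reproducing property to identify $\inner{\mu_p}{\phi(\bx_j)}_\cH = \bz_j$ and $\inner{\phi(\bx_i)}{\phi(\bx_j)}_\cH = \bK_{ij}$, and observe that the quadrature weights $\vw = \bK^{-1}\bz$ satisfy them --- the paper simply substitutes the weights and verifies the residual inner product vanishes, whereas you solve the normal equations first and match; your closing remark on the merely positive-semidefinite case is a small but sound addition beyond what the paper states.
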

\begin{proof}
Note that it suffices to show that the residual of the projection $\mu_p - \sum_j w_j \phi(\bx_j)$ is orthogonal to $\phi(\bx_i)$ for all $i$ in $\cH$. Recall that $w_i = \sum_j [\bK^{-1}]_{ij} \bz_j $, and $\bz_i = \int k(\bx,\bx_i) p(\bx) d(\bx)$. For an arbitrary index $i$, 

\begin{align*}
& \langle \mu_p - \sum_j w_j \phi(\bx_j), \phi(\bx_i) \rangle_\cH \\ 
= & \int k(\bx, \bx_i) p(\bx) d(\bx) - \langle \sum_j w_j \phi(\bx_j), \phi(\bx_i)\rangle_\cH \\ 
= & \bz_i  - \langle \sum_j w_j \phi(\bx_j), \phi(\bx_i)\rangle_\cH \\ 
= & \bz_i  - \sum_j w_j k(\bx_j, \bx_i)\\ 
= & \bz_i  - \sum_j \sum_t  [\bK^{-1}]_{tj} \bz_t  k(\bx_j, \bx_i)\\ 
= & \bz_i  - \sum_t \bz_t \sum_j  \bK_{ji} [\bK^{-1}]_{tj}   \\ 
= & \bz_i  - \bz_i ,  \\ 
\end{align*}  
where the last equality follows by noting that  $\sum_j \bK_{ji}  [\bK^{-1}]_{tj} $ is inner product of row $i$ of $\bK$ and row $t$ of $\bK^{-1}$ which is $1$ if $t=i$ and $0$ otherwise. This completes the proof. 
\end{proof}

Lemma~\ref{lem:orthoproj} implies that given the selected points, the posterior evaluation is equivalent to the optimizing for $\bw$ to minimize MMD$(p,q)^2$. In other words, the weight optimization is a simple linear regression in the mapped space $\cF$, and SBQ is equivalent to a greedy forward selection algorithm in $\cF$.

We shall also make use of recent results in generalization of submodular functions. Let $\frp(\sfS)$ be the power set of the set $\sfS$.

\begin{definition}[$\lambda$-weak submodular functions~\citep{Kempe2011,Elenberg2016}]\label{def:weaksubmodfn} A set function $g:\frp([n])\rightarrow \bbR$ is $\lambda$-weak submodular if $\exists \lambda > 0$ s.t. $\forall \sfL, \sfS \subset [n]$ $\sfL \cap \sfS = \emptyset$, 
\[
\sum_{j\in \sfS} \left[g(\sfL \cup \{j\}) - g(\sfL) \right] \geq \lambda \left[ g(\sfL \cup \sfS) - g(\sfL)\right]
\]
\end{definition}

Weak submodularity generalizes submodularity so that a greedy forward selection algorithm guarantees a $(1 - \nicefrac{1}{e^\lambda}) $ approximation for $\lambda$-weak submodular functions~\citep{Elenberg2016}. Standard submodular functions have a guarantee of $(1- \nicefrac{1}{e})$~\citep{nemhauser1978}. Thus, submodular functions are $1$-weak submodular. To provide guarantees for Algorithm~\ref{algo:greedy}, we show that the normalized set optimization function is $\frac{m}{M}$-weak submodular, where $m,M$ depend on the spectrum of the kernel matrix. 

%
%
%

\begin{lemma} \label{lem:greedy_weaksubmod_guarantee}~\citep{Kempe2011} The linear regression function is $\frac{m}{M}$-weak submodular where $m$ is the smallest $2r$ sparse eigenvalue and $M$ is the largest $r+1$-sparse eigenvalues of the dot product matrix of the features.
\end{lemma}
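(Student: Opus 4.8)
The plan is to recognize the set function in \eqref{eq:setoptimizationproblem}, $g(\sfS) = \bz_\sfS^\top \bK_{\sfS\sfS}^{-1}\bz_\sfS$, as exactly the explained-variance ($R^2$) objective of least-squares regression in the feature space $\cF$ — which Lemma~\ref{lem:orthoproj} already identifies as the quantity SBQ greedily maximizes — and then to lower bound its submodularity ratio (the constant $\lambda$ of Definition~\ref{def:weaksubmodfn}) by the advertised ratio of restricted eigenvalues, following Das and Kempe~\citep{Kempe2011}. Fix disjoint $\sfL,\sfS\subset[n]$ with $|\sfL|\le r$ and $|\sfS|\le r$; the whole argument rests on a closed form for the marginal gains, which I would obtain directly from the block-inverse identity of Proposition~\ref{prop:matrixinverse}.

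First I would compute the single-element gain. Writing $\bA = \bK_{\sfL\sfL}$, $\bb = \bk_{\sfL j}$, and $c = k_{jj}$ and applying Proposition~\ref{prop:matrixinverse} to $\bK_{(\sfL\cup\{j\})(\sfL\cup\{j\})}^{-1}$, the cross terms collapse into a single squared residual, yielding
\[
g(\sfL\cup\{j\}) - g(\sfL) = \frac{\left(\bz_j - \bk_{\sfL j}^\top\bK_{\sfL\sfL}^{-1}\bz_\sfL\right)^2}{k_{jj} - \bk_{\sfL j}^\top\bK_{\sfL\sfL}^{-1}\bk_{\sfL j}} =: \frac{r_j^2}{d_j},
\]
where $r_j$ is the residual covariance of feature $j$ after projecting out $\sfL$ and $d_j$ is its conditional variance (a $1\times1$ Schur complement). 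The analogous block computation for the full set gives the joint gain as a quadratic form in the residual vector $\br_\sfS = (r_j)_{j\in\sfS}$, namely $g(\sfL\cup\sfS) - g(\sfL) = \br_\sfS^\top (\bK/\bK_{\sfL\sfL})_\sfS^{-1}\br_\sfS$, where $(\bK/\bK_{\sfL\sfL})_\sfS$ is the conditional Gram matrix (the Schur complement of $\bK_{\sfL\sfL}$ in $\bK_{(\sfL\cup\sfS)(\sfL\cup\sfS)}$ restricted to the $\sfS$ block). The crucial bookkeeping point is that $r_j$ is the \emph{same} quantity in both expressions — it depends only on $\sfL$ and $j$ — so the factor $\sum_{j\in\sfS} r_j^2$ will appear in both the numerator and denominator of the ratio and cancel.

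With these two formulas, the two eigenvalue bounds finish the proof. For the numerator I would bound each conditional variance above by $d_j \le k_{jj} \le \lambda_{\max}\!\big(\bK_{(\sfL\cup\{j\})(\sfL\cup\{j\})}\big) \le M$, the largest $(r{+}1)$-sparse eigenvalue, giving $\sum_{j\in\sfS}[g(\sfL\cup\{j\}) - g(\sfL)] \ge \tfrac{1}{M}\sum_{j\in\sfS} r_j^2$. For the denominator I would show $\lambda_{\min}\!\big((\bK/\bK_{\sfL\sfL})_\sfS\big) \ge m$, the smallest $2r$-sparse eigenvalue, so that $(\bK/\bK_{\sfL\sfL})_\sfS^{-1}\preceq \tfrac{1}{m}\bI$ and $g(\sfL\cup\sfS) - g(\sfL) \le \tfrac{1}{m}\|\br_\sfS\|^2 = \tfrac{1}{m}\sum_{j\in\sfS} r_j^2$. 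Dividing, the residual sum cancels and the submodularity ratio is at least $m/M$, which is exactly the claimed $\tfrac{m}{M}$-weak submodularity.

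The main obstacle is the second eigenvalue bound, since the conditional Gram matrix is itself a Schur complement rather than a principal submatrix of $\bK$. To justify $\lambda_{\min}\!\big((\bK/\bK_{\sfL\sfL})_\sfS\big) \ge \lambda_{\min}\!\big(\bK_{(\sfL\cup\sfS)(\sfL\cup\sfS)}\big)$ I would use the identity that this Schur complement equals the inverse of the $\sfS\sfS$ block of $\bK_{(\sfL\cup\sfS)(\sfL\cup\sfS)}^{-1}$: a principal submatrix of the inverse has largest eigenvalue at most $\lambda_{\max}$ of the full inverse, i.e. at most $1/\lambda_{\min}\!\big(\bK_{(\sfL\cup\sfS)(\sfL\cup\sfS)}\big)$, and inverting this inequality gives the bound. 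The remaining care is purely combinatorial — tracking that $|\sfL\cup\{j\}|\le r{+}1$ controls the numerator via $M$ while $|\sfL\cup\sfS|\le 2r$ controls the denominator via $m$ — so that the union bound over the greedy trajectory plus any $r$-subset is precisely the sparse-eigenvalue condition stated in Theorem~\ref{thm:convergence}.
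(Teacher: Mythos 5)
Your proof is correct, but it takes a genuinely different route from the paper, because the paper does not actually prove this lemma: it imports the result wholesale from \citet{Kempe2011}, and its only original content is the remark that the Das--Kempe argument, stated for Euclidean space, ``directly translates'' to an RKHS provided the space is finite-dimensional or the atoms have bounded norm (this remark is why Theorem~\ref{thm:convergence} carries those hypotheses). You instead supply a self-contained spectral proof, and its steps check out: the Schur-complement expressions for the single-element gain $r_j^2/d_j$ and the joint gain $\br_\sfS^\top (\bK/\bK_{\sfL\sfL})_\sfS^{-1}\br_\sfS$ both follow from the block-inverse identity, the residuals $r_j$ in the two expressions coincide (they depend only on $\sfL$ and $j$) so they cancel in the ratio, the bound $d_j \le k_{jj} \le M$ controls the numerator, and the identity that the conditional Gram matrix is the inverse of the $\sfS\sfS$ block of $\bK_{(\sfL\cup\sfS)(\sfL\cup\sfS)}^{-1}$, combined with the fact that a principal submatrix cannot have a larger top eigenvalue than the full matrix, gives $\lambda_{\min}\bigl((\bK/\bK_{\sfL\sfL})_\sfS\bigr) \ge \lambda_{\min}\bigl(\bK_{(\sfL\cup\sfS)(\sfL\cup\sfS)}\bigr) \ge m$ for the denominator. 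In substance this reconstructs Das and Kempe's spectral lower bound on the submodularity ratio, in the unnormalized form later used by \citet{Elenberg2016} (the factor $M$ is needed precisely because the kernel diagonal $k_{jj}$ is not normalized to one). What your route buys: since every object in the argument ($\bK$, $\bz$, Schur complements) is a finite array of kernel evaluations, the proof never references the ambient feature space at all, so the paper's ``transfer to RKHS under bounded norm'' step becomes unnecessary for the lemma itself --- the sparse-eigenvalue hypotheses on $\bK$ do all the work; it also makes explicit exactly where the sparsity levels $r{+}1$ and $2r$ enter. What the paper's route buys is brevity. Two small points of bookkeeping: (i) Proposition~\ref{prop:matrixinverse} as printed in the paper is missing minus signs on its off-diagonal blocks; your residual $r_j = \bz_j - \bk_{\sfL j}^\top \bK_{\sfL\sfL}^{-1}\bz_\sfL$ is the one that follows from the \emph{correct} identity, so you have silently repaired that; (ii) Definition~\ref{def:weaksubmodfn} quantifies over all disjoint $\sfL,\sfS$, whereas your bound (like the lemma's eigenvalue parameters) only covers $|\sfL|\le r$, $|\sfS|\le r$; along the greedy trajectory $\sfL$ grows past $r$, so $m$ must really be taken at sparsity $|\sfL|+r$ --- you flag this, and the paper itself has the same slack in its discussion of Theorem~\ref{thm:convergence}.
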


We note that Lemma~\ref{lem:greedy_weaksubmod_guarantee} as proposed and proved by~\citet{Kempe2011} is for the euclidean space. However, their results directly translate to general RKHS as long as the RKHS is bounded, or the candidate atoms have bounded norm. Hence under additional assumptions of bounded norm, the proofs and results of~\citet{Kempe2011} directly translate to general RKHS. 
From Lemma~\ref{lem:greedy_weaksubmod_guarantee} and recent results on weakly submodular functions, (\citep[Corollary 1]{Elenberg2016}), we get the following approximation guarantee for $g(\cdot)$ under the assumptions of Lemma~\ref{lem:greedy_weaksubmod_guarantee}.

\[ g(\sfS_G)  \geq \left(1 - \exp(- \frac{mk}{Mr}) \right)  g(\sfS^\star).\]

Setting $g(\sfS) = v(\phi) - v(\sfS) $, we get the final result.

\end{document}